\renewcommand{\cite}[1]{\citep{#1}}
\def\shownotes{1}  %set 1 to show author notes
\newcommand{\authnote}[2]{{$\ll$\textsf{\footnotesize #1 notes: #2}$\gg$}}
\newcommand{\authnote}[2]{}
\newcommand{\eat}[1]{}
\def\<{\langle}
\def\>{\rangle}
\newcommand{\hide}[1]{}
\newtheorem{thm}{Theorem}
\newtheorem{lem}{Lemma}
\newtheorem*{thms}{Theorem} 
\newtheorem{cor}{Corollary}
\newtheorem{conj}{Conjecture}
\title{Label optimal regret bounds for online local learning}
\author{Pranjal Awasthi \thanks{Princeton University, Computer Science Department. Email: pawashti@cs.princeton.edu. Supported by NSF grant CCF-1302518.} \and Moses Charikar \thanks{Princeton University, Computer Science Department. Email: moses@cs.princeton.edu. Partially supported by NSF grants CCF-1218687 and CCF-1302518, a Simons Investigator Award, and a Simons Collaboration
Grant.} \and Kevin A. Lai \thanks{Princeton University, Computer Science Department. Email: kalai@cs.princeton.edu} \and Andrej Risteski \thanks{Princeton University, Computer Science Department. Email: risteski@cs.princeton.edu. Partially supported by
NSF grants CCF-0832797, CCF-1117309, CCF-1302518, DMS-1317308, Sanjeev Arora's Simons Investigator Award, and a Simons Collaboration
Grant.}}
\date{\today}
\begin{document}
\maketitle

\begin{abstract}

We resolve an open question from~\cite{christiano2014open} posed in COLT'14 regarding the optimal dependency of the regret achievable for online local learning on the size of the label set. In this framework, the algorithm is shown a pair of items at each step, chosen from a set of $n$ items. The learner then predicts a label for each item, from a label set of size $L$ and receives a real valued payoff. This is a natural framework which captures many interesting scenarios such as online gambling and online max cut. \cite{Christiano} designed an efficient online learning algorithm for this problem achieving a regret of $O(\sqrt{nL^3T})$, where $T$ is the number of rounds. Information theoretically, one can achieve a regret of $O(\sqrt{n \log L T})$. One of the main open questions left in this framework concerns closing the above gap.

In this work, we provide a complete answer to the question above via two main results. We show, via a tighter analysis, that the semi-definite programming based algorithm of~\cite{Christiano} in fact achieves a regret of $O(\sqrt{nLT})$.

Second, we show a matching computational lower bound. Namely, we show that a polynomial time algorithm for online local learning with lower regret would imply a polynomial time algorithm for the planted clique problem which is widely believed to be hard. We prove a similar hardness result under a related conjecture concerning planted dense subgraphs that we put forth. Unlike planted clique, the planted dense subgraph problem does not have any known quasi-polynomial time algorithms.

Computational lower bounds for online learning are relatively rare, and we hope that the ideas developed in this work will lead to lower bounds for other online learning scenarios as well.

\end{abstract}

\section{Introduction} 

Online learning is a classic area of machine learning starting from the seminal work of \cite{littlestone1994weighted}, \cite{desantis1988learning} and \cite{vavock1990aggregating}. In this framework, also known as ``prediction from expert advice'', the learning algorithm has to predict label information about an item or a set of items at each stage. It then earns a real valued payoff which is a function of the predicted labels. The aim is to achieve a total payoff in $T$ rounds comparable to the best expert, i.e., the best fixed labeling of the items. The difference from the best possible payoff is known as the regret of the algorithm.

The weighted majority algorithm \cite{littlestone1994weighted} achieves the optimal regret of $O(\sqrt{T \log N})$ for the above mentioned problem ($T$ is the number of rounds, $N$ is the total number of experts) but is computationally efficient only when the number of experts is small. In many scenarios, one is competing with a set of exponentially many experts. Hence, there has been a significant effort in designing polynomial time algorithms with optimal regret bounds for various such problems such as collaborative filtering, online gambling, and online max cut (\cite{kalai2005efficient}, \cite{hazan2012near}, \cite{kakade2009playing}, \cite{Hazan})

A common aspect of many online learning scenarios mentioned above, is that at each time step, the learner is asked to predict {\em local} information about items. For instance, in the online max cut problem, the learner has to predict whether any two nodes are on the same side of the cut or on opposite sides. Recently, \cite{Christiano} proposed an elegant unifying framework called online local learning to capture such problems. 

In this framework, one is given a set of $n$ items, numbered $1$ to $n$. In each round $t \in [T]$, the learner gets a pair of items $(i_t, j_t)$ as input and has to reply with a pair of labels $(a_{i_t}, b_{j_t})$, where the possible labels are in $[L]$. Then, an adversary picks a payoff function $\mathcal{P}^t: [L]^2 \to [-1,1]$. The goal is to compete with the best \emph{fixed} labeling. More precisely, if we denote 
$$ \text{OPT} = max_{\emph{l} \in [L]^{n}} \sum_{t=1}^T \mathcal{P}^t \left( l(i_t),l(j_t) \right)  $$ 
and the algorithm achieves expected payoff $\text{OPT} - r$, the algorithm has regret $r$, where the expectation is over the algorithm's randomness. 
\eat{
There is a complete graph on $n$ vertices, and the following online learning game is happening at each round $t \in [T]$\footnote{For a number $P \in \mathbb{N}$, $[P]$ denotes the set $\{1,2, \dots, P\}$ }: the learner gets a pair of vertices $(i_t, j_t)$ of the graph as input, and has to reply with a pair of labels $(a_{i_t}, b_{j_t})$, where the possible labels are in $[L]$. Then, an adversary picks a payoff function $C^t: [K]^2 \to [-1,1]$. 

As is common in online learning scenarios, then, the goal is to compete with the best \emph{fixed} labeling. More precisely, if we denote 
$$ OPT = max_{\mathbf{c} \in [K]^{n}} \sum_{t=1}^T C^t \left( c(i_i),c(j_t) \right)  $$ 
we want our algorithm to achieve payoff comparable to OPT in an additive sense. If we can achieve payoff at least $OPT - r$, $r$ is usually called the \emph{regret} of our algorithm. \
}

The main result of \cite{Christiano} is that the well known {\em ``Follow-the-regularized-leader"} algorithm with an appropriate regularizer achieves regret $O(\sqrt{n L^3 T})$ for the online local learning problem. This, in particular, leads to optimal regret bounds\footnote{Up to constant factors} for the online max cut problem. Notice that as mentioned before, one can get the optimal regret of $O(\sqrt{n ~ \log L ~ T})$ via an inefficient algorithm which runs the weighted majority algorithm over the space of all possible labelings. 

One of the main questions left open in this framework was to close the gap between the regret that can be achieved by an efficient algorithm and the information theoretically optimal regret. We close this gap by proving the following results (formal statements appear later).
%\end{theorem}
On the lower bound side, we prove: 
%\begin{theorem} (Informal)
\newtheorem*{t:plantedclique}{Theorem~\ref{t:plantedclique} \textnormal{\textit{(Informal)}}} 
\begin{t:plantedclique}
\label{thm:lower-bound-clique-intro}
For every $\epsilon > 0$, if there exists an algorithm for online local learning achieving regret $O(\sqrt{nL^{1-\beta(\epsilon)}T})$, and running in time polynomial in $n,L,T$, then in polynomial time, one can distinguish an instance of a random graph $G(n,1/2)$ from an instance of $G(n,1/2)$ with a randomly planted clique of size $n^{1/2 - \epsilon}$. Here, $\beta(\epsilon)$ is a function such that $\lim_{\epsilon \rightarrow 0} \beta(\epsilon) = 0$.
\end{t:plantedclique}
%\end{theorem}
We also prove a similar lower bound under a more robust conjecture concerning planting dense subgraphs which we introduce, which has no known quasipolynomial time algorithms, unlike planted clique. We show: 
%\begin{theorem} (Informal) 
\newtheorem*{t:planteddense}{Theorem~\ref{t:planteddense} \textnormal{\textit{(Informal)}}}
\begin{t:planteddense}
\label{thm:lower-bound-dense-intro}
For every $\epsilon,\epsilon' > 0$, if there exists an algorithm for online local learning achieving regret $O(\sqrt{nL^{1-\beta(\epsilon, \epsilon')}T})$, and running in time polynomial in $n,L,T$, then in polynomial time, one can distinguish between an instance of $G(n,p)$ and an instance of $G(n,p)$ with a randomly planted instance of $G(k,q)$. Here, $k,q$ depend on $\epsilon,\epsilon'$, and $\beta(\epsilon,\epsilon')$ is a function such that $\lim_{\epsilon,\epsilon' \rightarrow 0} \beta(\epsilon,\epsilon') = 0$. 
\end{t:planteddense}

We match the above lower bounds with the following theorem: 
\newtheorem*{logdet}{Theorem~\ref{logdet} \textnormal{\textit{(Informal)}}}
\begin{logdet}
For the online local learning problem, follow the regularized leader with an appropriate regularizer achieves regret $O(\sqrt{nLT})$.
\label{thm:upper-bound-intro}
\end{logdet}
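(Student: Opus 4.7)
The plan is to cast online local learning as online linear optimization over a semidefinite relaxation and apply follow-the-regularized-leader (FTRL) with a log-determinant regularizer. I identify each labeling $\ell:[n]\to[L]$ with the rank-one PSD matrix $X_\ell = v_\ell v_\ell^\top \in \mathbb{R}^{nL \times nL}$, where $v_\ell$ is the block vector whose $i$-th block is $e_{\ell(i)} \in \mathbb{R}^L$, and take the feasible set $\mathcal{K}$ to be the PSD block matrices of this shape whose diagonal blocks $X^{(ii)}$ are diagonal with entries forming a probability distribution on $[L]$. Each payoff $\mathcal{P}^t$ becomes a linear functional $\langle M_t,X\rangle$, where $M_t$ is symmetric with entries in $[-1,1]$ and is supported only on the $(i_t,j_t)$ and $(j_t,i_t)$ off-diagonal blocks.

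The algorithm runs FTRL on $\mathcal{K}$ with regularizer $R(X) = -\log\det(X + \gamma I)$ for $\gamma = \Theta(1/L)$. Its Hessian $\nabla^2 R(X)[H,H] = \mathrm{tr}((X+\gamma I)^{-1} H (X+\gamma I)^{-1} H)$ induces the dual local norm $\|M\|_{X,*}^2 = \mathrm{tr}((X+\gamma I) M (X+\gamma I) M)$. Introducing a smoothed comparator $\widetilde X^* = (1-\delta) X^* + \delta X_1$, where $X_1 = \tfrac{1}{L} I_{nL}$ is the minimizer of $R$ on $\mathcal{K}$ by symmetry, to keep $R(\widetilde X^*)$ finite at a payoff cost of $O(\delta T)$, the standard FTRL inequality yields
\[
  \text{Regret} \le \frac{R(\widetilde X^*) - R(X_1)}{\eta} + \eta \sum_{t=1}^T \|M_t\|_{X_t,*}^2 + O(\delta T).
\]

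The divergence term is bounded by a direct eigenvalue count: $\widetilde X^* + \gamma I$ has one eigenvalue near $n$ and $nL-1$ eigenvalues near $\gamma + \delta/L$, so with $\gamma = 1/L$ and $\delta$ polynomially small in $nLT$, $R(\widetilde X^*) - R(X_1) = O(nL)$. For the local norm, setting $Y = X_t + \gamma I$ and $(i,j) = (i_t, j_t)$, expanding the trace using the two-block support of $M_t$ reduces $\|M_t\|_{X_t,*}^2$ (up to constant factors) to a sum of two pieces, $\mathrm{tr}(Y^{(ji)} A_t Y^{(ji)} A_t)$ and $\mathrm{tr}(Y^{(ii)} A_t Y^{(jj)} A_t^\top)$, where $A_t$ is the payoff matrix. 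The second piece is at most $(1+\gamma L)^2 = O(1)$ from the probability-distribution bound on the diagonal blocks together with $|A_t| \le 1$ entry-wise. For the first piece, I use the PSD factorization $Y^{(ji)} = (Y^{(jj)})^{1/2} K (Y^{(ii)})^{1/2}$ with contraction $\|K\|_{\mathrm{op}} \le 1$: cycling the trace rewrites it as $\mathrm{tr}((KB)^2)$ where $B = (Y^{(ii)})^{1/2} A_t (Y^{(jj)})^{1/2}$, and then the inequality $|\mathrm{tr}(C^2)| \le \|C\|_F^2$ combined with $K^\top K \preceq I$ dominates it by $\|B\|_F^2 = \mathrm{tr}(Y^{(ii)} A_t Y^{(jj)} A_t^\top)$, i.e.\ by the second piece. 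Hence $\|M_t\|_{X_t,*}^2 = O(1)$ per round.

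Combining the bounds gives $\sum_t \|M_t\|_{X_t,*}^2 = O(T)$ and $R(\widetilde X^*) - R(X_1) = O(nL)$, so setting $\eta = \sqrt{nL/T}$ produces $\text{Regret} = O(\sqrt{nLT})$. The main obstacle, and the key source of improvement over Christiano's $O(\sqrt{nL^3T})$ analysis, is the Schur-complement step that dominates the off-diagonal trace piece by the diagonal one: a naive bound would give $\|Y^{(ji)}\|_F^2 \cdot \|A_t\|_{\mathrm{op}}^2 \le L$ for the first piece, losing a factor of $L$ per round and $\sqrt{L}$ in the regret.
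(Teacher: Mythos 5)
Your proposal is correct in substance and follows the same overall strategy as the paper: run follow-the-regularized-leader over the degree-two pseudo-moment SDP relaxation with a log-determinant regularizer, bound the regularizer's range by $O(nL)$, and show the payoff vectors have $O(1)$ dual local norm, giving $O(\sqrt{nL\cdot T})$. (Your $R(X)=-\log\det(X+\tfrac{1}{L}I)$ is, up to sign and an additive constant, the paper's $\log\det(I+LM)$, and your quantity $\mathrm{tr}\bigl((X+\tfrac1L I)M_t(X+\tfrac1L I)M_t\bigr)$ is exactly the paper's $\vec{\mathcal{P}}_t^\intercal[\nabla^2\mathcal{R}]^{-1}\vec{\mathcal{P}}_t$.) Where you genuinely diverge is in how that key quantity is bounded. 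The paper computes the inverse Hessian in closed form via Jacobi's formula, obtaining entries proportional to products of entries of $I+LM$, and then uses the local-marginal constraints of the polytope --- entrywise nonnegativity and $\sum_{a,b}M_{(i,a),(j,b)}=1$ for each pair --- to bound the quadratic form by $\frac{1}{L^2}(2L)^2=4$. You instead expand the trace blockwise and dominate the off-diagonal-block piece by the diagonal one via the contraction factorization $Y^{(ji)}=(Y^{(jj)})^{1/2}K(Y^{(ii)})^{1/2}$ with $\|K\|_{\mathrm{op}}\le1$, finishing with $|\mathrm{tr}(C^2)|\le\|C\|_F^2$ and the trace-$(1+\gamma L)$ bound on the diagonal blocks. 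Your argument uses only PSD-ness plus the diagonal-block distribution constraint, so it is somewhat more robust (it would survive relaxations that drop the off-diagonal block-sum constraints), while the paper's is more mechanical once the explicit inverse Hessian is in hand; both deliver the same $O(1)$ bound, which is the heart of the improvement over the $O(\sqrt{nL^3T})$ analysis.

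One caveat you should patch: your feasible set $\mathcal{K}$ constrains only the diagonal blocks, but online local learning requires the learner to output an actual pair of labels each round, and the standard way to do this is to sample $(a,b)$ from the off-diagonal block $X_t^{(i_tj_t)}$ so that the expected payoff equals $\langle M_t,X_t\rangle$. Without the constraints that each off-diagonal block is entrywise nonnegative and sums to $1$ (which the paper's $\mathcal{K}$ imposes), that block need not be a joint distribution --- PSD matrices with uniform diagonal blocks can have negative off-diagonal entries --- so the reduction from the learning protocol to online linear optimization over your $\mathcal{K}$ is incomplete as stated. The fix is harmless: intersect $\mathcal{K}$ with these local-consistency constraints; the comparator labelings still lie in the set, and every bound in your argument (divergence, local norm) only uses properties that persist on the smaller set.
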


Jointly these results are meaningful for multiple reasons. First and foremost, online local learning is the most natural generalization of constraint satisfaction problems (CSPs) to the online setting. The semidefinite relaxation upon which Theorem~\ref{logdet} is based is the same one considered in~\cite{raghavendra2008optimal}, who proves that under the Unique Games Conjecture, it actually achieves the best approximation factor among all polynomial-time algorithms. Our result can be viewed as an extension of ~\cite{raghavendra2008optimal}: for the online version of CSPs, follow-the-regularized leader on the same semidefinite relaxation along with a log determinantal regularizer is the ``optimal'' algorithm, under widely believed conjectures. 
Furthermore, while our hardness reduction is specific to the setting of online local learning, given the paucity of  lower bounds in the setting of online learning, our result is a significant contribution and we hope it will find applications in other settings as well. 
Finally, a labeling of the items with $k$ labels can be also viewed as a $k$-partitioning of the items. So, all the above results can be viewed through the lens of online settings for $k$-partitioning. 

\subsection{Techniques}

We obtain the above mentioned upper bound on the regret by showing that ``Follow-the-regularized-leader" using the same regularizer as \cite{Christiano} achieves the regret bound we are claiming, but with a completely different analysis. There, the idea is to express the entropy of a multivariate Gaussian in terms of the log-determinant of its covariance matrix, and that two multivariate Gaussian distributions that differ by a small amount in their covariance matrices cannot be too far in total variation distance as well. The main reason for this approach in \cite{Christiano} is that the Hessian of the log-determinantal regularizer is not diagonal, so it's difficult to argue about its inverse.  We use the special structure of the regularizer to get explicit expressions for the inverse, which allows us then to use more standard tools from convex geometry for analyzing ``Follow-the-regularized-leader". To do this we use some matrix calculus identities, which we think might be useful in other machine learning applications, where one needs to perform regularized optimization over polytopes of pseudo-moments.   

Our lower bounds are based on two conjectures about detecting planted dense structures inside random graphs. The first one is planted clique, which states that detecting planted cliques of sufficiently small size in an Erd\"os-R\'enyi graph cannot be done in polynomial time. We introduce a more robust version of this conjecture, planted dense subgraph, which concerns detecting planted dense Erd\"os-R\'enyi graphs inside sparser ones. While our reductions are similar in both cases, the state of the art algorithms for this detection problem are much worse. This is an indicator that this problem is likely harder and gives even stronger evidence for the hardness of achieving low regret. The proof idea is to use the online learner as an \emph{estimator} of the size of the largest clique or dense subgraph in a graph, and the regret as the \emph{rate} of error in this estimator. We show that if the \emph{rate} is low enough, then one can distinguish between the planted and non-planted case. See next section for further details.

\subsection{The planted dense subgraph and planted clique problems} 

We will review the planted clique conjecture and describe the dense subgraph conjecture, upon which we will be basing our lower bounds.

\subsubsection{Planted clique} 

In the planted clique problem, one is given a graph sampled from one of two possible random ensembles: an Erd\"os-R\'enyi random graph $G(n, 1/2)$, or an Erd\"os-R\'enyi random graph $G(n,1/2)$ along with a clique of size $k$ placed between $k$ randomly chosen vertices in the graph. (The usual notation for this random ensemble is $G(n,1/2,k)$.)  The task is to distinguish whether one is presented with a graph from the $G(n,1/2)$ ensemble or the $G(n,1/2,k)$ ensemble. 

Previous sequences of work \cite{Feldman}, \cite{Potechin}, show that wide classes of natural polynomial time algorithms cannot efficiently distinguish between these two cases when the size of the planted clique is $n^{\frac{1}{2} - \epsilon}$, and it is conjectured that in fact there is no polynomial time algorithm for this task. More precisely, the conjecture is the following: 

\begin{conj} Suppose that an algorithm $\mathcal{A}$ receives as input a graph $G$, which is either sampled from the ensemble $G(n,1/2)$ or $ G(n,1/2, n^{\frac{1}{2} - \epsilon})$, $\epsilon = \Omega(1)$. Then, no $\mathcal{A}$ which runs in polynomial time can decide, with probability $\frac{4}{5}$\footnote{The constant is arbitrary. One could make the conjecture for any constant bounded away from $\frac{1}{2}$}, which ensemble the input was sampled from. 
\label{con:clique}
\end{conj}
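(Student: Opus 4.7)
The plan is to prove Conjecture~\ref{con:clique} via reduction: exhibit a polynomial-time map $\Phi$ from instances of another problem $\Pi$ believed to require super-polynomial time on average to graphs on $n$ vertices, such that $\Phi$ carries the two input distributions of $\Pi$ to distributions statistically close to $G(n,1/2)$ and $G(n,1/2,n^{1/2-\epsilon})$ respectively. A distinguisher $\mathcal{A}$ with advantage $\tfrac{4}{5}-\tfrac{1}{2}$ would then, via $\mathcal{A}\circ\Phi$, yield a polynomial-time decider for $\Pi$ with essentially the same advantage, contradicting the hardness hypothesis on $\Pi$.

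The first task is to choose the source problem $\Pi$. I would target Feige's random 3-SAT refutation hypothesis: the two input ensembles are a uniformly random 3-CNF with clause density above the satisfiability threshold, versus a planted-satisfiable counterpart with a uniformly random planted assignment. The encoding $\Phi$ would identify graph vertices with ``local'' objects (e.g., pairs of variable-value assignments, or short partial assignments over random windows of the clause hypergraph) and edges with a consistency predicate calibrated so that non-planted inputs produce approximately $1/2$-density Erd\"os--R\'enyi graphs while planted inputs produce an embedded clique of size exactly $n^{1/2-\epsilon}$ on the vertices coherent with the planted assignment.

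Concretely, the steps are: (i) design the vertex/edge encoding and tune parameters so that the marginal edge probability is exactly $1/2$ and edges are close to mutually independent in the non-planted case; (ii) couple $\Phi$ applied to the planted distribution with a direct sample from $G(n,1/2,k)$ and bound total variation by $o(1)$ using independence of the noise variables outside the planted region; (iii) use a standard Yao-style averaging argument to turn the planted-clique distinguisher into a refuter for 3-SAT on most random instances, contradicting Feige's hypothesis. A backup route, if 3-SAT proves intractable, is to reduce from a lattice problem via an algebraic embedding that interprets modular arithmetic constraints as edge indicators.

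The main obstacle is step~(i): simultaneously enforcing edge-independence under the uniform input and producing the correct planted density and size under the planted input. Every natural predicate encoding 3-SAT consistency introduces correlations between edges sharing a variable, and the standard remedy of XOR-ing in fresh randomness tends to wash out the planted clique as well. This difficulty is precisely why, despite sustained effort since the problem was formalized by Jerrum and Ku\v{c}era, no such reduction is known, and it is the reason this paper adopts Conjecture~\ref{con:clique} as a hypothesis rather than a theorem. A successful completion of the plan would either require a fundamentally new way to decorrelate the edges of $\Phi$ without destroying the planted structure, or would have to circumvent the Bogdanov--Trevisan obstruction for worst-case-to-average-case reductions by operating entirely within the average-case world; either would constitute a substantial advance in complexity theory in its own right.
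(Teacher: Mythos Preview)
The statement you are attempting to prove is a \emph{conjecture}, not a theorem: the paper does not prove it and does not claim to. It is stated precisely so that it can be \emph{assumed} as a hardness hypothesis in Theorem~\ref{t:plantedclique} and Corollary~\ref{c:planted}. There is therefore no ``paper's own proof'' to compare against, and your task of supplying one is misconceived.

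Beyond that framing issue, your proposal is not a proof but a research program, and you yourself identify why it does not go through. Step~(i) --- building an encoding $\Phi$ whose non-planted output is close in total variation to $G(n,1/2)$ while the planted output contains a clique of size $n^{1/2-\epsilon}$ --- is exactly the obstacle that has resisted all known approaches; you note the edge-correlation problem and the Bogdanov--Trevisan barrier but offer no mechanism to overcome either. The ``backup route'' via lattice problems is even less developed: no candidate embedding is given, and there is no reason to expect algebraic lattice structure to yield the specific $G(n,1/2,k)$ ensemble. In short, the proposal correctly diagnoses that proving Conjecture~\ref{con:clique} would be a major advance, and then stops. That is an honest assessment of the state of the art, but it is not a proof, and the paper does not pretend otherwise.
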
 
 
\subsubsection{Planted dense subgraph} 

The planted dense subgraph problem is a natural generalization of planted clique, where one again wants to distinguish between a random and a planted instance. In the planted case, we plant a denser graph inside a sparser one. Formally, let $G(n, p, k, q)$ be a random graph ensemble generated in the following manner. First, one picks a random subset $S$ of $k$ vertices. Then, for all pairs of vertices inside $S$, one connects them with an edge independently with probability $q$. For all other pairs of vertices, we connect them independently with probability $p$. 

The sizes and densities of the planted and ambient graph in which we will be interested are $p = n^{-\alpha}, k = n^{\frac{1}{2} - \epsilon'}, q = k^{-\alpha - \epsilon}$, for $\alpha \leq \frac{1}{2}$. The main reason this scenario is interesting is that unlike planted clique, we do not know of quasi-polynomial time algorithms for it. 

To be formal, we conjecture the following: 

\begin{conj} Suppose that an algorithm $\mathcal{A}$ receives as input a graph $G$, which is either sampled from the ensemble $G(n,p)$ or $ G(n,p,k,q)$, where 
$k = n^{\frac{1}{2} - \epsilon'}$ for $\epsilon' = \Omega(1)$  and $k = n^{\Omega(1)}$; $p = n^{-\alpha}$ for $\alpha = \Omega(1), \alpha \leq \frac{1}{2}$; $q = k^{-\alpha-\epsilon}$ for $\epsilon = \Omega(1)$; and $p = o(q)$. 
Then, no $\mathcal{A}$ which runs in polynomial time can decide with probability $\frac{4}{5}$ which ensemble the input was sampled from. 
\label{con:dense} 
\end{conj}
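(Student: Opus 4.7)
The final statement is Conjecture~\ref{con:dense}, a hardness hypothesis rather than a theorem, so there is no hope of a rigorous proof; the most one can offer is a plan for supporting evidence. I would organize it in three parts: first, confirm that the two ensembles are information-theoretically distinguishable in the stated regime, so the conjectured hardness is purely computational; second, rule out the standard families of polynomial-time algorithms; and third, explain why this conjecture should be at least as safe a hypothesis as planted clique.

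For the first part, the natural statistic is the density of the densest $k$-subset. Under $G(n,p)$ any fixed $k$-subset has density concentrating at $p = n^{-\alpha}$, while under the planted ensemble the planted set $S$ has density concentrating at $q = k^{-\alpha-\epsilon}$. Since $p = o(q)$ by hypothesis, a brute-force search over all $\binom{n}{k}$ subsets distinguishes the two ensembles with high probability, fixing a $2^{\tilde O(k)}$ benchmark that any non-trivial algorithm must beat.

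For the second part, the parameters are essentially tuned to the spectral threshold. The planted subgraph contributes a rank-one spike of magnitude $\approx kq = k^{1-\alpha-\epsilon}$ to the centered adjacency matrix, while the spectral norm of the $G(n,p)$ noise is $\Theta(\sqrt{np}) = \Theta(n^{(1-\alpha)/2})$. Comparing the exponents $(1/2-\epsilon')(1-\alpha-\epsilon)$ against $(1-\alpha)/2$, one sees that for any fixed $\alpha \in (0,1/2]$ and sufficiently small $\epsilon,\epsilon' > 0$ the spike is strictly dominated by the noise, so the vanilla spectral test and the natural SDP relaxation both fail. A parallel calculation using subgraph-count statistics such as signed triangles and higher signed cycle counts in the low-degree polynomial framework should rule out all constant-degree polynomial tests, and pseudocalibration of the sort developed in the planted-clique literature cited by the authors (\cite{Feldman}, \cite{Potechin}) is the natural route to a sum-of-squares lower bound.

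The main obstacle to promoting the conjecture to a theorem is extending those sum-of-squares lower bounds to super-constant levels; the best one can realistically aim for is an analogue of the planted-clique result that rules out $n^{o(\log n)}$ levels. The reason this conjecture is nonetheless a safer hardness hypothesis than planted clique is that the base density $p$ is tunable away from $1/2$: the quasi-polynomial algorithm for planted clique works by enumerating $O(\log n)$-sized subsets inside the $G(n,1/2)$ background, where such cliques appear for free, and no analogous quasi-polynomial procedure is known when $p = n^{-\alpha}$. Combined with the failure of all natural polynomial-time algorithms in the regime above, this absence of any sub-exponential algorithm constitutes the strongest evidence currently available.
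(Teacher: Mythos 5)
You correctly treat this statement as a conjecture rather than a theorem, which is exactly how the paper handles it: the paper offers no proof, only the justification that the best known algorithm (the cited Bhaskara et al.\ result) runs in time $n^{k^{\Theta(\epsilon)}} = 2^{n^{\Theta(\epsilon)}}$, that spectral methods and degree-based heuristics provably fail in this parameter regime, and that similar planted dense subgraph hypotheses have been proposed elsewhere. Your supporting-evidence plan --- brute-force information-theoretic distinguishability (valid since $p = o(q)$), the spectral calculation showing the planted spike $kq = k^{1-\alpha-\epsilon}$ is dominated by the noise level $\Theta(\sqrt{np})$ for small $\epsilon,\epsilon'$, and the observation that no quasi-polynomial algorithm is known here (unlike planted clique, whose $n^{O(\log n)}$ algorithm exploits subset enumeration in a $G(n,1/2)$ background) --- is essentially the same justification the paper gives, with your low-degree and pseudocalibration suggestions being reasonable additional evidence that the paper does not attempt.
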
 

There are few ways to justify this conjecture. First, the current best known algorithm for this distinguishing problem from \cite{Bhaskara1} runs in time $n^{k^{\Theta(\epsilon)}}$. This bound gives a running time of $2^{n^{\Theta(\epsilon)}}$ since $k$ is polynomial in $n$, which is significantly worse than quasi-polynomial. Second, it's possible to show \cite{Bhaskara1} that spectral methods do not work in this regime. It's also easy to check that simple algorithms like outputting the vertices with highest degree do not work either -- since the variance of the degree in the sparser ambient graph dominates the degrees in the denser planted graph. Finally, similar conjectures to this have already been proposed in various contexts in theoretical computer science. (\cite{arora2010computational}, \cite{applebaum2010public}) 

The fact that state of the art algorithms have a much worse running time for this problem in comparison to planted clique is our motivation for putting forth this conjecture. Namely, our reduction of planted clique/planted dense subgraph to online local learning will produce an online learning instance in which the number of items $n'$, the number of rounds $T$ and the label set size $L$ are all polynomial in the size of the input graph. Furthermore, the time to produce the inputs for the learning algorithm will be polynomial as well. Therefore, if $N = \max(T, L, n')$, and we have an algorithm of running time $f(N)$ for online local learning, we get an algorithm for planted clique/planted dense subgraph of running time $\max\left(f\left(\text{poly}\left(n\right)\right), \text{poly}\left(n\right)\right)$. 

This means for instance, if our algorithm for online local learning has running time $f(N)=N^{o(\log N)}$, our reduction would give an algorithm for planted clique with running time $n^{o(\log n)}$. A similar statement holds in the planted dense subgraph case. If our algorithm for online local learning has running time even $f(N) = 2^{N^{o(1)}}$, the reduction would give an algorithm better than the state of the art for planted dense subgraph.

\section{Computational lower bounds on achievable regret} 

We will proceed with the lower bound first. The overall strategy will be as follows. We will produce an online learning instance from our input graph. In the planted case, there will be a fixed labeling which achieves a large payoff $b_p$, and in the random case, we'll show that any algorithm (efficient or not) can achieve at most some small payoff $b_r$. The reduction will ensure that if we can get a sufficiently low regret $r$ in polynomial time, we will get a payoff of at least $b_p - r$ in the planted case, such that $b_p - r  \gg b_r$, with probability $\frac{4}{5}$. Then to distinguish between planted and random, we simply declare \emph{planted} if the payoff is large enough, and \emph{random} otherwise. 

For both reductions, we will show a ``robust" version of the bound first, e.g. for planted clique, we will show a lower bound of $O(\sqrt{n L^{1-\beta(\epsilon)} T})$ if planted clique is hard when the size of the planted portion is $n^{1/2-\epsilon}$, for some function $\beta(\epsilon)$. Then we will take the limit $\epsilon \to 0$. The details of the reduction follow.   

\subsection{Planted clique-based hardness} 

Let us proceed to the planted clique-based lower bound first. We will show: 

\begin{thm}
Let $\epsilon = \Omega(1)$. If regret $\sqrt{n L^{\beta} T}$ for 
$\displaystyle \beta = \left(1-\omega\left(\frac{1}{\log n}\right)\right)\left(\frac{1}{\frac{1}{2} + \epsilon }\right) - 1 $ 
is achievable in time polynomial in $n, L, T$, then one can distinguish between $G(n,1/2)$ and $G \left( n,1/2,n^{1/2-\epsilon} \right)$ with probability $\frac{4}{5}$\footnote{Again, the choice of $\frac{4}{5}$ is arbitary} in polynomial time. 
\label{t:plantedclique}
\end{thm}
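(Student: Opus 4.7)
The plan is to reduce the planted clique distinguishing task to online local learning: given a graph $G$ on $n$ vertices drawn from either $G(n,1/2)$ or $G(n,1/2,k)$ with $k = n^{1/2-\epsilon}$, I would construct an online local learning instance from $G$ and use the cumulative payoff of the hypothesized efficient, low-regret algorithm as the statistic.

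\emph{Reduction (Step 1).} Set the number of items $n'$, labels $L$, and rounds $T$ to be polynomials in $n$, with $L^\beta \approx k$ so that the assumed regret $\sqrt{n'L^\beta T}$ scales like $\sqrt{n k T}$. A natural choice is to identify items with the vertices of $G$ and to take $L = n^{1/2+\epsilon}$, for which the formula for $\beta$ in the theorem gives $L^\beta = k \cdot n^{-\omega(1/\log n)}$, i.e.\ essentially $k$ up to a sub-polynomial factor. At round $t$, present a pair $(i_t, j_t)$ according to a schedule that does not revisit pairs, and set the payoff $\mathcal{P}^t(a,b)$ to depend on the centered adjacency $\tilde{A}_{i_t,j_t} = 2A_{i_t,j_t}-1 \in \{-1,+1\}$ and the labels in a way that rewards placing clique-like vertices into a common distinguished label, e.g.\ $\mathcal{P}^t(a,b) = \tilde{A}_{i_t,j_t}\cdot\mathbb{1}[a = b]$.

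\emph{Planted lower bound (Step 2).} In the planted case, exhibit the fixed labeling assigning all $k$ clique vertices to one label and spreading the remaining vertices among the other labels. Rounds whose sampled pair lies in the clique contribute $+1$ to the payoff; rounds outside the clique contribute mean-zero noise whose standard deviation is small on any fixed labeling. By concentration of the number of clique-pair rounds, this labeling attains payoff at least $b_p = \Omega(Tk^2/n^2)$, which lower-bounds $\mathrm{OPT}$ in the planted case.

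\emph{Random upper bound (Step 3).} In the random case, bound the algorithm's cumulative payoff itself (rather than the offline optimum) by an adversarial martingale argument: whenever the queried pair $(i_t, j_t)$ has not been seen previously, $\tilde{A}_{i_t,j_t}$ is a fresh uniform $\pm 1$ conditional on the algorithm's history. Because the schedule does not revisit pairs, the expected per-round payoff conditional on the past is zero, and Azuma--Hoeffding concentration gives $b_r = O(\sqrt{T\log n})$ with probability $1-1/n$.

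\emph{Distinguishing and parameter balance (Step 4).} Combining the regret guarantee with the bounds above, the algorithm's payoff is at least $b_p - r$ in the planted case and at most $b_r$ in the random case, where $r = \sqrt{n L^\beta T}$. Thresholding the observed payoff distinguishes the two ensembles with probability at least $4/5$ provided $b_p - r > b_r$, i.e.\ roughly $Tk^2/n^2 \gg \sqrt{n k T} + \sqrt{T\log n}$. Choosing $T$ to be a sufficiently high polynomial in $n$ and then taking the $\omega(1/\log n)$ slack to zero recovers the statement of the theorem and contradicts Conjecture~\ref{con:clique}.

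\emph{Main obstacle.} The crux is the interaction between Steps 3 and 4. A naive upper bound using the offline $\mathrm{OPT}_{\mathrm{random}}$ fails, because random $G(n,1/2)$ exhibits combinatorial overfitting: the densest subsets of size $\Theta(n)$ have edge surplus $\Theta(n^{3/2})$, which dominates the planted signal $k^2 = n^{1-2\epsilon}$. The martingale bound on the algorithm's actual payoff circumvents this, but it forces a non-repeating schedule, capping $T \leq \binom{n}{2}$. At this cap the signal $Tk^2/n^2$ and the regret $\sqrt{n L^\beta T}$ are delicately comparable, and verifying that the signal strictly dominates over the whole range of $\beta$ in the theorem requires careful tracking of sub-polynomial factors, as well as possibly augmenting the reduction with a per-round randomization of the payoff function to allow limited pair repetitions while preserving the conditional zero-mean property. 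This balancing act---between the non-repetition constraint needed for Step 3 and the large round count needed for the signal in Step 4---is the main technical challenge of the reduction.
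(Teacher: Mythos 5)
Your high-level strategy coincides with the paper's (use the learner's realized payoff as a test statistic; exhibit a good fixed labeling in the planted case; bound any learner's payoff in the random case via freshness of unqueried edges; threshold), but the concrete reduction has a genuine gap: the parameters do not balance, and not merely up to subpolynomial factors. With items identified with vertices you have $n'=n$, $L=n^{1/2+\epsilon}$, $L^{\beta}\approx k=n^{1/2-\epsilon}$ and $T\le\binom{n}{2}$, so the assumed regret is $\sqrt{n'L^{\beta}T}\approx n^{7/4-\epsilon/2}$, while the planted signal is at most $\binom{k}{2}\approx n^{1-2\epsilon}$ (each clique pair is queried at most once under your non-repeating schedule); the regret exceeds the signal by the polynomial factor $n^{3/4+3\epsilon/2}$ for every $\epsilon\ge 0$, so no tracking of subpolynomial factors can rescue Step 4. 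Worse, the statistic fails even information-theoretically: the random-case fluctuation of the cumulative payoff is $\Theta(\sqrt{T})=\Theta(n)$, which already dominates $k^{2}=n^{1-2\epsilon}$ for any $\epsilon=\Omega(1)$; shrinking $T$ only hurts (signal scales like $Tk^2/n^2$, noise like $\sqrt{T}$), and allowing repeats either destroys the conditional zero-mean property or, if the payoff is re-randomized per round, kills the planted signal as well. A secondary issue: for $\epsilon>1/6$ the mean-zero noise over the $\approx n^{3/2-\epsilon}$ monochromatic non-clique pairs in your exhibited labeling has standard deviation exceeding $k^{2}$, so even the high-probability bound in Step 2 does not hold as stated.

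The missing idea is the paper's compression gadget. Randomly partition the $n$ vertices into $n'=n/l=k/10$ clusters of size $l=10n/k$; the \emph{items} are the clusters, the \emph{labels} are the $l$ positions inside a cluster (so $L=l\approx 10\,n^{1/2+\epsilon}$), each of the $T=\binom{n'}{2}$ cluster pairs is queried exactly once, and the payoff is the $0/1$ indicator of an edge between the two selected vertices. In the random case each round's payoff is a fresh Bernoulli$(1/2)$ given the history, so any algorithm gets at most $T/2+O(\sqrt{T})$; in the planted case, with probability at least $7/8$ at least $2k/25$ clusters contain a clique vertex, and the fixed labeling selecting such vertices earns at least $\binom{2k/25}{2}$, which exceeds $(1+\tfrac{1}{100})\tfrac12\binom{k/10}{2}$ — a gap of $\Theta(T)=\Theta(k^{2})$ against fluctuations of size $O(k)$. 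The regret now enters as $\sqrt{n'\,l^{\beta}\,T}=O\bigl(\sqrt{l^{\beta}(n/l)^{3}}\bigr)$, which is $o(k^{2})$ exactly when $l^{\beta}=o(n/l)$, and that is precisely what the stated $\beta=\bigl(1-\omega(1/\log n)\bigr)\frac{1}{1/2+\epsilon}-1$ guarantees since $l\approx 10\,n^{1/2+\epsilon}$. (The paper also repeats the whole experiment $R=n^{4}/k^{3.7}$ times so that the regret guarantee, which concerns the expected payoff over the algorithm's randomness, translates into concentration of the observed average payoff.) This gadget, which makes both the number of items and the number of rounds polynomial in $k$ rather than $n$ and makes the planted signal a constant fraction of $T$, is exactly what your reduction lacks.
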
 
\begin{proof}

We produce an instance for the online local learning problem, given an instance of the planted clique problem with size of the planted clique $k$ in the following way.  

We randomly partition the input graph into $n' = n/l$ clusters, each containing $l$ vertices, where $l = 10 \frac{n}{k}$. We associate each vertex with a unique label in $\{1,..,l\}$. We then use this as an instance for the online learning problem as follows. We run the online learning game for $T = {n' \choose 2}$ steps. In each step $t$, we query a pair of clusters $(C_{i_t},C_{j_t})$. Each pair is queried once, and the ordering is arbitrary. The algorithm responds with some labeling for the clusters $(l_{i_t}, l_{j_t})$, and the payoff is 1 if the vertex for $l_{i_t}$ in $C_{i_t}$ has an edge to the vertex for $l_{j_t}$ in $C_{j_t}$. Otherwise, the payoff is 0.

The distinguisher for the planted clique problem runs the online learning algorithm on the instance specified above $R = \frac{n^4}{k^{3.7}}$ number of times. This is to ensure that with constant probability, the average payoff of the algorithm over the runs is close to the expected payoff. 
If the average payoff from the $R$ runs is at least $(1+\frac{1}{100})\frac{1}{2}\binom{k/10}{2}$, the distinguisher replies with planted. Otherwise, it replies with random.  

Let's assume the original graph was sampled from $G(n,1/2)$. Then, we claim that any algorithm (regardless if efficient or not) will get an average payoff of at most $\frac{T}{2} + 5 \frac{\sqrt{T}}{2} $ with probability at least $\frac{4}{5}$. 

The above probability is with respect to the randomness in generating the graph from $G(n,1/2)$, the partitioning of the vertices, and any randomness in the algorithm. Let the pair of clusters queried at time step $t$ be $(C_{i_t}, C_{j_t})$. Let's denote the random variable for the payoff in round $t$ on the $r$-th repetition of the online learning problem as $\mathcal{P}^r_{i_t, j_t}$. Let $\mathcal{G}_{a,b}$ be a random 0-1 indicator variable for whether there is an edge between vertices $a,b$. 

If $\mathcal{P}_{i_t,j_t} = \sum_{r=1}^R \mathcal{P}^r_{i_t,j_t}$, then the total payoff of the algorithm is $\mathcal{P} = \sum_{t=1}^T \mathcal{P}_{i_t,j_t}$. 
We claim that the variables $\mathcal{P}_{i_t,j_t}$ are mutually independent. Indeed, this follows because the variables $\mathcal{G}_{a,b}$, for any vertices $a \in C_{i_t}, b \in C_{j_t}$ are independent of the data shown to the online learner in the first $t-1$ rounds and the algorithm's randomness.

But, by linearity of expectation, $\mathbb{E}\left[\frac{1}{R} \mathcal{P}_{i_t, j_t}\right]= \frac{1}{2}$, and $\frac{1}{R} \mathcal{P}_{i_t, j_t}$ always is between 0 and 1. So, by Hoeffding's inequality, 
$$\displaystyle \Pr \left[\frac{1}{R} \sum_{t=1}^T \mathcal{P}_{i_t, j_t}  \geq \frac{T}{2} + 5\frac{\sqrt{T}}{2} \right] \leq e^{-50}$$ 
In particular, with probability at least $\frac{4}{5}$, any algorithm gets average payoff of $\frac{1}{2} \binom{k/10}{2} + o(k^2)$.

Let's proceed to the planted case. First, we claim that with probability at least $\frac{7}{8}$, there is a fixed labeling with payoff at least $\binom{2k/25}{2}$. Let $\mathcal{I}_i$ be an indicator random variable for the event that no vertex from the planted clique belongs to cluster $i$. The partitioning is done independently of the graph, so $\displaystyle \Pr[\mathcal{I}_i = 1] = \left(1 - \frac{10}{k}\right)^{k} \leq e^{-\frac{10}{k}k} = e^{-10}$.  
Hence, if $\mathcal{I}$ is a random variable for the total number of clusters which contain no vertices from the planted clique, we know that 
$\displaystyle \mathbb{E}[\mathcal{I}] = \sum_{i=1}^{n'} \mathbb{E}[\mathcal{I}_i] \leq \frac{k}{10} e^{-10} $. By Markov's inequality,
$\displaystyle \Pr\left[\mathcal{I} \geq \frac{k}{50}\right] \leq \frac{\mathbb{E}[\mathcal{I}]}{\frac{k}{50}} \leq 5 e^{-10} \leq \frac{1}{8} $. 

So, with probability at least $\frac{7}{8}$, the number of clusters with at least one vertex from the planted clique is at least $\frac{k}{10} - \frac{k}{50} = \frac{2k}{25}$. In this case, the labeling where we label each of the clusters with a vertex from the planted clique has a payoff of at least $\binom{2k/25}{2}$.   
In the online learning instance we constructed, the number of vertices is $n'$, the number of rounds is $T$, and the label size is $l$. Let's assume that we can achieve regret of $\sqrt{n'l^{\beta} {T}}$. According to the definition of regret, whenever the graph was a planted instance, and the partitioning resulted in a fixed labeling with payoff at least  $\binom{2k/25}{2}$, the expected payoff of the algorithm (with respect to the randomness of the algorithm) is at least $\displaystyle \binom{2k/25}{2} - \sqrt{n'l^{\beta} T}$. We claim that the average payoff over the $R$ runs of the online learning algorithm will be close to this. 

If we denote by 
$\displaystyle \mathcal{P}^r = \sum_{t=1}^T \mathcal{P}^r_{i_t, j_t}$ the payoff of the algorithm in the $r$-th repetition, then we have that $\mathbb{E}[\mathcal{P}^r] \geq \binom{2k/25}{2} - \sqrt{n'l^{\beta} T}$ and all the variables $\mathcal{P}^r$ are mutually independent and between $0$ and $n^2$.
So, by Hoeffding's bound, $\displaystyle \Pr\left[\frac{1}{R}\sum_{r=1}^R \mathcal{P}^r \leq \mathbb{E}[\mathcal{P}^r] - t\right] \leq e^{-\frac{2R^2 t^2}{R n^4}}$.  
Setting $t = k^{1.9}$, lets us conclude that with probability $1-o(1)$, $\frac{1}{R}\sum_{r=1}^R \mathcal{P}^r \geq \binom{2k/25}{2} - \sqrt{n'l^{\beta} T} - o(k^2)$. Putting everything together, in the planted case, the average payoff is at least $\binom{2k/25}{2} - \sqrt{n'l^{\beta} T} - o(k^2)$ with probability $\frac{4}{5}$. 

Recall that we also proved that in a random instance, we get a payoff at most $\displaystyle \frac{1}{2}\binom{k/10}{2} + o(k^2)$ with probability at least $\frac{4}{5}$. We claim that  
$\displaystyle \binom{2k/25}{2} \geq \left(1+\frac{1}{100}\right) \frac{1}{2} \binom{k/10}{2}$. Indeed, $\binom{2k/25}{2} = \frac{2k/25(2k/25-1)}{2} \geq (1-\frac{1}{100})\frac{(2k/25)^2}{2} $, for large enough $k$, and 
$$ \left(1-\frac{1}{100}\right)\frac{(2k/25)^2}{2} \geq \left(1+\frac{1}{100}\right) \frac{1}{2} \frac{(k/10)^2}{2} \geq \left(1+\frac{1}{100}\right) \frac{1}{2} \binom{k/10}{2}  $$ 

Hence, if 
$\sqrt{n'l^{\beta}T} = o(k^2)$, the distinguisher constructed outputs the correct answer with probability $\frac{4}{5}$. We will show exactly that. 

First we claim that 
\begin{equation}
\label{eq:regret}
l^{\beta} = o \left( \frac{n}{l} \right) 
\end{equation}

Since $l = 10 \frac{n}{k}  = 10 n^{\frac{1}{2} + \epsilon }$,  after rearranging terms, \ref{eq:regret} is equivalent to $\displaystyle n^{\left( \beta + 1 \right) \left( \frac{1}{2} + \epsilon  \right)} = o(n) $.  

Notice that $n^{\omega(\frac{1}{\log n})} = \omega(1)$, so for the above it is sufficient that $\displaystyle \left( \beta + 1 \right) \left( \frac{1}{2} + \epsilon  \right)  = 1 - \omega \left(\frac{1}{\log n}\right) $. 
But since $\displaystyle \beta  = \left(1-\omega\left(\frac{1}{\log n}\right)\right)\left(\frac{1}{\frac{1}{2} + \epsilon }\right) - 1$ the above is clearly satisfied. 

Hence, 
$$\sqrt{n' l^{\beta} T} = \sqrt{\frac{n}{l} l^{\beta} \binom{\frac{n}{l}}{2} } = o\left(\sqrt{l^{\beta} \left(\frac{n}{l}\right)^3}\right) = o\left(\left(\frac{n}{l}\right)^2\right) = o(k^2) $$

which finishes the proof. 

\end{proof}
%\noindent \textbf{Note:} In the above reduction, we need to run the no regret algorithm a polynomial number of times. This is to ensure that in one of the runs, with a constant probability, the observed regret of the algorithm will be close to the expected regret. 

This quite easily will give the result that assuming Conjecture~\ref{con:clique}, achieving regret $\sqrt{nL^{1-\delta}T}$, for any $\delta = \Omega(1)$ is hard. More precisely: 

\begin{cor} Let $\epsilon = \Omega(1)$. If we can achieve regret $\sqrt{nL^{1-\epsilon}T}$ in time polynomial in $n, L, T$, we can distinguishing between $G(n,1/2)$ and $G(n,1/2,n^{1/2-\frac{\epsilon}{6}}) $ with probability $\frac{4}{5}$ in polynomial time. In particular, if Conjecture~\ref{con:clique} is true, no polynomial time algorithm can achieve regret $\sqrt{nL^{1-\delta}T}$, for any $\delta = \Omega(1)$. 
\label{c:planted} 
\end{cor}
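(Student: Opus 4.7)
The plan is to derive this corollary as a near-immediate specialization of Theorem~\ref{t:plantedclique}. Given a polynomial-time algorithm achieving regret $\sqrt{nL^{1-\epsilon}T}$, I would apply the theorem with its planted-clique parameter set to $\epsilon' = \epsilon/6$, so that the conclusion furnishes a polynomial-time distinguisher between $G(n,1/2)$ and $G(n,1/2,n^{1/2-\epsilon/6})$. Since an algorithm achieving regret $\sqrt{nL^{1-\epsilon}T}$ also achieves the weaker bound $\sqrt{nL^{\beta(\epsilon/6)}T}$ whenever $1-\epsilon \leq \beta(\epsilon/6)$, where $\beta(\epsilon') = \left(1 - \omega(1/\log n)\right)/(\tfrac{1}{2} + \epsilon') - 1$ as in the theorem, the only thing to verify is this exponent comparison.

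The inequality to check is
$$ 1 - \epsilon \;\leq\; \left(1 - \omega\!\left(\tfrac{1}{\log n}\right)\right)\frac{1}{\tfrac{1}{2} + \tfrac{\epsilon}{6}} - 1. $$
Clearing denominators rearranges this to $(2 - \epsilon)\!\left(\tfrac{1}{2} + \tfrac{\epsilon}{6}\right) \leq 1 - o(1)$, and expanding the left-hand side gives $1 - \tfrac{\epsilon}{6} - \tfrac{\epsilon^2}{6}$, so the condition reduces to $\tfrac{\epsilon}{6} + \tfrac{\epsilon^2}{6} \geq o(1)$, which holds for all sufficiently large $n$ because $\epsilon = \Omega(1)$ is a fixed constant while $o(1) \to 0$. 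The fraction $1/6$ is conservative; any constant strictly less than $1/(2(2-\epsilon))$ would work, and $1/6$ leaves comfortable slack to absorb the $\omega(1/\log n)$ term. The second (conditional lower-bound) statement of the corollary then follows by contrapositive: for any $\delta = \Omega(1)$, the ensemble $G(n,1/2,n^{1/2-\delta/6})$ falls within the scope of Conjecture~\ref{con:clique}, so no polynomial-time distinguisher exists and no algorithm with regret $\sqrt{nL^{1-\delta}T}$ can exist either.

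There is no substantive obstacle here; the corollary is essentially a clean repackaging of the robust Theorem~\ref{t:plantedclique} in terms of the exponent of $L$ rather than the planted-clique size, and the argument reduces entirely to the short algebraic sanity check above.
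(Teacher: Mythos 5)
Your proposal is correct and follows the same route as the paper: instantiate Theorem~\ref{t:plantedclique} with planted-clique exponent $\epsilon/6$ and verify algebraically that $1-\epsilon \leq \beta$, noting that the constant slack $\tfrac{\epsilon}{6}+\tfrac{\epsilon^2}{6}$ absorbs the $\omega(1/\log n)$ term (the paper expands $(1-\tilde\epsilon)\tfrac{2}{1+2\tilde\epsilon}$ and lower-bounds it by $2-\epsilon$, which is the same calculation after cross-multiplying). The only nitpick is that the theorem's slack must be $\omega(1/\log n)$ rather than merely $o(1)$, but since $\epsilon=\Omega(1)$ a constant slack satisfies this, so your argument goes through as written.
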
 

The proof of this Corollary is straightforward and relegated to Appendix~\ref{a:dense}. We note that a stronger form of Conjecture~\ref{con:clique} is consistent with our current knowledge of planted clique. In particular, we can strengthten the claim to allow any $k = o(\sqrt{n})$, or alternatively $k = n^{\frac{1}{2} - \epsilon}$, for any $\epsilon = \omega(\frac{1}{\log n})$. In this case, Corollary~\ref{c:planted} will imply that achieving regret $\sqrt{n ~ o(L) T}$ is impossible in polynomial time. 

\subsection{Planted dense subgraph hardness} 
\label{s:dense} 

We next move on to the planted dense subgraph based hardness. The proofs in this section are essentially a generalization of the planted clique hardness, so are relegated to Appendix \ref{a:dense}. We formally show: 

\begin{thm} Let $\epsilon, \alpha, k$ satisfy the conditions of Conjecture \ref{con:dense}. If regret $\sqrt{n L^{\beta} T}$ for 
$$ \beta =  2 \frac{\frac{1}{2} - \left( \frac{1}{2} - \epsilon' \right) \left( \alpha + \epsilon \right) - \omega\left(\frac{1}{\log n}\right)}{\frac{1}{2} + \epsilon'}- 1 $$ 
is achievable in time polynomial in $n, L, T$, then one can distinguish between $G(n,p_s)$ and $G(n, p_s, k, p_d)$, where $p_s = n^{-\alpha}, k = n^{\frac{1}{2} - \epsilon'}, p_d = k^{-\alpha - \epsilon}$ with probability $\frac{4}{5}$ in polynomial time.  
\label{t:planteddense}
\end{thm}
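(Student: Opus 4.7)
The plan is to adapt the planted clique reduction from Theorem~\ref{t:plantedclique} to the sparser ambient and denser-plant regime. I would partition the input into $n' = n/l$ clusters of size $l = 10 n/k$, identify each cluster's $l$ vertices with the $l$ labels, play the online local learning game for $T = \binom{n'}{2}$ rounds with each cluster pair queried exactly once and per-round payoff equal to the $\{0,1\}$ edge indicator between the two chosen vertices, and average over $R$ independent repetitions. The distinguisher outputs \emph{planted} if the average payoff exceeds a threshold strictly between $p_s T$ and $p_d \binom{2k/25}{2}$.

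For the random case, only a cosmetic change from Theorem~\ref{t:plantedclique} is needed: since the partition is independent of the graph and each cluster pair is queried once, the per-round payoff is an independent Bernoulli$(p_s)$ variable, and Hoeffding over the $T$ rounds and the $R$ repetitions upper-bounds the average payoff by $p_s T + O(\sqrt{T})$ with probability $\geq 4/5$. For the planted case, the same Markov-inequality argument as in Theorem~\ref{t:plantedclique} shows that with probability $\geq 7/8$ at least $2k/25$ clusters contain a vertex of the planted $G(k,p_d)$ block, so the fixed labeling selecting such a vertex from each of those clusters has payoff equal to a sum of $\binom{2k/25}{2}$ independent Bernoulli$(p_d)$ indicators; another Hoeffding application gives at least $p_d \binom{2k/25}{2} - o(k^2)$ with high probability.

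Distinguishing then succeeds whenever the regret satisfies $\sqrt{n' l^{\beta} T} = o\!\left(p_d \binom{2k/25}{2} - p_s T\right)$. Under Conjecture~\ref{con:dense}'s $p = o(q)$ requirement, the gap is dominated by $p_d \binom{2k/25}{2} = \Theta(k^{2-\alpha-\epsilon})$, while $\sqrt{n' l^{\beta} T} = \Theta\!\left((n/l)^{3/2} l^{\beta/2}\right)$. Substituting $l = 10 n/k$ and $k = n^{1/2-\epsilon'}$ and collecting exponents of $n$, the sufficient condition reduces to
\begin{equation*}
(\beta + 1)\left(\tfrac{1}{2} + \epsilon'\right) \;<\; 1 - 2(\alpha+\epsilon)\left(\tfrac{1}{2} - \epsilon'\right) - \omega\!\left(\tfrac{1}{\log n}\right),
\end{equation*}
which rearranges to exactly the hypothesis on $\beta$ in the statement.

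The only real technical nuisance, as in the clique case, is choosing $R = \text{poly}(n)$ large enough that the Hoeffding deviations over the $R$ repetitions (where a single-repetition payoff can be as large as $T = \Theta((n/l)^2)$) fit inside the $o(k^{2-\alpha-\epsilon})$ slack in the gap, while keeping the reduction polynomial time; an analog of the clique reduction's $R = n^4/k^{3.7}$, appropriately scaled for the smaller target gap here, should suffice. No conceptually new ingredient beyond the clique reduction appears to be required; the bulk of the work is bookkeeping to identify the correct exponent $\beta$.
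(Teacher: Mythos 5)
Your reduction is exactly the paper's: same clustering with $l = 10n/k$, same single-query-per-pair schedule with edge-indicator payoffs, same ``large payoff in the planted case via a labeling that picks a planted vertex per cluster'' argument, same repetition-and-threshold distinguisher, and the same final exponent bookkeeping, which indeed rearranges to the stated $\beta$. The one substantive divergence is in the concentration step, and as written it is a gap: the signal you must beat is only $\Theta\bigl(k^2 p_d\bigr) = \Theta\bigl(k^{2-\alpha-\epsilon}\bigr)$, so additive Hoeffding deviations are too coarse in the sparse regime. Your random-case bound $p_s T + O(\sqrt{T})$ has deviation $\Theta(k)$, and $k = o(k^2 p_d)$ holds only when $\alpha + \epsilon < 1$, which Conjecture~\ref{con:dense} does not guarantee (e.g.\ $\alpha = 1/2$, $\epsilon = 0.6$, $\epsilon' = 0.3$ satisfies all its conditions, yet then $k p_d = o(1)$ and your deviation swamps the gap). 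The same issue appears in your planted-case statement ``$p_d\binom{2k/25}{2} - o(k^2)$'': a slack of $o(k^2)$ is meaningless here since the signal itself is $o(k^2)$; you need $o(k^2 p_d)$. The paper avoids both problems by using multiplicative Chernoff bounds, giving deviations $10\sqrt{T p_s}$ and $10\sqrt{\binom{2k/25}{2} p_d}$, which are $o(k^2 p_d)$ whenever $k^2 p_d = \omega(1)$ (using $p_s = o(p_d)$), and by inflating the number of repetitions to $R = n^4/(k^{3.7} p_d^2)$ so that the Hoeffding deviation over the $R$ runs is also $o(k^2 p_d)$ --- your last paragraph gestures at the right scaling of $R$, but the in-round concentration must likewise be stated multiplicatively for the claim to hold throughout the conjectured parameter range.
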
 

And again as before, assuming Conjecture~\ref{con:dense}, achieving regret $\sqrt{nL^{1-\delta}T}$, for any $\delta = \Omega(1)$ is hard. More precisely:

\begin{cor} 
Let $\epsilon', \alpha, \epsilon = \Omega(1)$ and $\alpha \geq \epsilon$. If we can achieve regret $\sqrt{n L^{1-\epsilon'-\alpha-\epsilon} T}$ in time polynomial in $n, L, T$, we can distinguish between $G(n,p_s)$ and $G(n, p_s, k, p_d)$ in polynomial time with probability $\frac{4}{5}$, where $p_s = n^{-\frac{\alpha}{8}}, k = n^{\frac{1}{2} - \frac{\epsilon'}{4}}, p_d = k^{-\frac{\alpha}{8} - \frac{\epsilon}{8}}$. In particular, if Conjecture~\ref{con:dense} is true, no polynomial time algorithm can achieve regret $\sqrt{nL^{1-\delta}T}$, for any $\delta = \Omega(1)$.     
\label{c:dense}
\end{cor}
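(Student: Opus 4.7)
The strategy is to deduce the corollary directly from Theorem~\ref{t:planteddense} by choosing a specific planted dense subgraph instance whose associated hardness exponent $\beta$ dominates $1-\epsilon'-\alpha-\epsilon$. Specifically, I would instantiate Theorem~\ref{t:planteddense} with the parameters $\hat{\alpha} = \alpha/8$, $\hat{\epsilon}' = \epsilon'/4$, $\hat{\epsilon} = \epsilon/8$, which yields exactly the planted dense subgraph regime $p_s = n^{-\alpha/8}$, $k = n^{1/2-\epsilon'/4}$, $p_d = k^{-\alpha/8 - \epsilon/8}$ stated in the corollary. The shrinking factors $1/8, 1/4, 1/8$ are what provides the slack between the target exponent $1-\epsilon'-\alpha-\epsilon$ and the exponent $\beta$ guaranteed by the theorem.

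Next, I would substitute these parameters into the formula
\[
\beta = 2\cdot \frac{\tfrac12 - (\tfrac12 - \hat{\epsilon}')(\hat{\alpha} + \hat{\epsilon}) - \omega(1/\log n)}{\tfrac12 + \hat{\epsilon}'} - 1
\]
and verify by elementary algebra that $\beta \ge 1-\epsilon'-\alpha-\epsilon$. Cross-multiplying by $\tfrac12 + \epsilon'/4$ and simplifying reduces this to showing that a sum of positive $\Omega(1)$ terms in $\alpha, \epsilon, \epsilon'$ (together with positive cross products) exceeds the $o(1)$ coming from the $\omega(1/\log n)$ slack, which is immediate since $\alpha, \epsilon, \epsilon' = \Omega(1)$. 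Alongside this, I would check that the chosen parameters satisfy the hypotheses of Conjecture~\ref{con:dense}, most notably $p_s = o(p_d)$: the ratio $p_s/p_d = n^{-\alpha/8 + (1/2 - \epsilon'/4)(\alpha/8 + \epsilon/8)}$ is $o(1)$ precisely under the assumption $\alpha \ge \epsilon$.

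Given these two checks, any algorithm achieving regret $\sqrt{n L^{1-\epsilon'-\alpha-\epsilon} T}$ in particular achieves regret $\sqrt{n L^{\beta} T}$, so Theorem~\ref{t:planteddense} delivers the polynomial-time distinguisher. The second sentence of the corollary then follows by contrapositive: any $\delta = \Omega(1)$ can be partitioned as $\delta = \epsilon' + \alpha + \epsilon$ with each summand $\Omega(1)$ and $\alpha \ge \epsilon$ (for instance $\epsilon' = \alpha = \epsilon = \delta/3$), so a polynomial-time algorithm with regret $\sqrt{nL^{1-\delta}T}$ would refute Conjecture~\ref{con:dense}.

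The only obstacle is the algebraic verification that $\beta \ge 1-\epsilon'-\alpha-\epsilon$. This is a bookkeeping exercise rather than a conceptual one: the factors $1/8$ and $1/4$ in the hypothesis have been tuned specifically so that the inequality holds with slack, and no ideas beyond those already in Theorem~\ref{t:planteddense} are needed.
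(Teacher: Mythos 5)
Your proposal is correct and follows essentially the same route as the paper: instantiate Theorem~\ref{t:planteddense} with the rescaled parameters $\alpha/8$, $\epsilon'/4$, $\epsilon/8$, verify $p_s = o(p_d)$ via $\alpha \geq \epsilon$, and lower-bound the resulting $\beta$ by $1-\epsilon'-\alpha-\epsilon$, with the algebra indeed closing because the $\Omega(1)$ terms dominate the $\omega(1/\log n)$ slack. The only difference is that the paper carries out the bookkeeping explicitly (using $\tilde{\epsilon'} \leq \tfrac12$ and $\tfrac12 + \tilde{\epsilon'} \geq \tfrac12$ to pass to $1 - 4\tilde{\epsilon'} - 8\tilde{\alpha} - 8\tilde{\epsilon}$), which you defer but correctly characterize.
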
 

Similarly, a stronger form of Conjecture~\ref{con:dense} is plausible given our current knowledge. We can allow $\alpha = \omega(\frac{1}{\log n})$, $\epsilon = \omega(\frac{1}{\log k})$, and $k = o(\sqrt{n})$. (These constraints are necessary in order to make sure that $n^{-\alpha} = o(1)$, and $k^{-\epsilon} = o(1)$, since unlike planted clique, we are thinking of $p$ and $q$ as asymptotic quantities, so we want to ensure that $k^{-\alpha-\epsilon} = o(k^{-\alpha})$, and $n^{-\alpha} = o(1)$.) In this case, Corollary~\ref{c:dense} will imply that achieving regret $\sqrt{n ~ o(L) T}$ is impossible in polynomial time.

\section{Improved regret bound analysis of log-determinantal regularizer} 

We now move to the other result in our paper: matching the lower bound from the previous section. We show that ``Follow-the-regularized-leader" with the log-determinant-based regularizer from \cite{Christiano} achieves regret $O(\sqrt{n L T})$.

We will follow the \cite{Hazan} framework for online convex optimization. The scenario is as follows: at each round $t$, the player chooses a point $\vec{x}_t \in \mathcal{K}$, where $\mathcal{K}$ is some convex body. A linear payoff function is revealed, and the player receives a payoff $\vec{\mathcal{P}}_t \cdot \vec{x}_t$, for some vector $\vec{\mathcal{P}}_t$. The goal is to compete with the ``best decision in hindsight", i.e. to maximize 
\begin{align*}
\inf_{\vec{\mathcal{P}}_1, \vec{\mathcal{P}}_2, \dots, \vec{\mathcal{P}}_T}  \left\{ \mathbb{E}\left[\sum_{i=1}^T \vec{\mathcal{P}}_i \cdot \vec{x}_i\right] - \max_{\vec{x} \in \mathcal{K}} \sum_{i=1}^T \vec{\mathcal{P}}_i \cdot \vec{x} \right\}
\end{align*}
where the expectation is over the randomness of the algorithm.

Then, ``Follow-the-regularized-leader", with a convex regularizer $\mathcal{R}(\vec{x})$, is the following algorithm:

\begin{algorithm}
 $\vec{x}_1 = \text{argmax}_{\vec{x}} \mathcal{R}(\vec{x})$\; 
 \For{$t \leftarrow 1$ \KwTo $T$} {
 	  Predict $\vec{x}_t$\; 
 	  Observe the payoff function $\vec{\mathcal{P}}_t$\; 
 	  Update $\vec{x}_{t+1} = \text{argmax}_{\vec{x} \in K} \left[ \nu \sum_{s=1}^t \vec{\mathcal{P}}_s \cdot x - \mathcal{R}(\vec{x}) \right]$\; 
 }
 \caption{Follow-the-regularized-leader}
\end{algorithm}

The main theorem in \cite{Hazan} is:

\begin{thms}\cite{Hazan} ``Follow-the-regularized-leader", with a convex regularizer $\mathcal{R}(x)$ and an appropriate choice of $\nu$, achieves regret $O(\sqrt{D\gamma T})$, where 
$$D = \max_{\vec{x} \in \mathcal{K}} |\mathcal{R}(\vec{x})| \; ,\; \gamma = \max_{\vec{x} \in \mathcal{K}, \vec{\mathcal{P}}_t} \vec{\mathcal{P}}_t^\intercal [\nabla^2 \mathcal{R}(\vec{x})]^{-1} \vec{\mathcal{P}}_t $$ 
\end{thms}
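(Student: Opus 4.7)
The plan is to follow the classical ``be-the-leader'' analysis for FTRL, decomposing the regret into a telescoping term controlled by the regularizer's range and a per-step stability term controlled by the curvature of the regularizer, then balance the two via the choice of $\nu$. Write $\Phi_t(\vec{x}) = \nu \sum_{s=1}^t \vec{\mathcal{P}}_s \cdot \vec{x} - \mathcal{R}(\vec{x})$, so that $\vec{x}_{t+1} = \arg\max_{\vec{x} \in \mathcal{K}} \Phi_t(\vec{x})$, and let $\vec{x}^\star \in \arg\max_{\vec{x} \in \mathcal{K}} \sum_t \vec{\mathcal{P}}_t \cdot \vec{x}$ denote the offline optimum.

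First I would prove the standard be-the-leader lemma by induction on $T$: since $\vec{x}_{t+1}$ maximizes $\Phi_t$, we have $\Phi_t(\vec{x}_{t+1}) \geq \Phi_t(\vec{x}^\star)$ for each $t$, and telescoping yields
$$\nu \sum_{t=1}^T \vec{\mathcal{P}}_t \cdot \vec{x}_{t+1} \geq \nu \sum_{t=1}^T \vec{\mathcal{P}}_t \cdot \vec{x}^\star - \bigl(\mathcal{R}(\vec{x}^\star) - \mathcal{R}(\vec{x}_1)\bigr).$$
Rewriting $\vec{\mathcal{P}}_t \cdot (\vec{x}^\star - \vec{x}_t) = \vec{\mathcal{P}}_t \cdot (\vec{x}^\star - \vec{x}_{t+1}) + \vec{\mathcal{P}}_t \cdot (\vec{x}_{t+1} - \vec{x}_t)$ and summing gives the regret bound
$$\mathrm{Regret} \;\leq\; \frac{2D}{\nu} + \sum_{t=1}^T \vec{\mathcal{P}}_t \cdot (\vec{x}_{t+1} - \vec{x}_t).$$

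Next I would bound each stability term via the first-order optimality of $\vec{x}_t$ and $\vec{x}_{t+1}$: subtracting the two conditions gives $\nabla \mathcal{R}(\vec{x}_{t+1}) - \nabla \mathcal{R}(\vec{x}_t) = \nu \vec{\mathcal{P}}_t$. A mean-value Taylor expansion of $\nabla \mathcal{R}$ between $\vec{x}_t$ and $\vec{x}_{t+1}$ produces a point $\tilde{\vec{x}}$ on the segment with $\nabla^2 \mathcal{R}(\tilde{\vec{x}})(\vec{x}_{t+1} - \vec{x}_t) = \nu \vec{\mathcal{P}}_t$, so that $\vec{x}_{t+1} - \vec{x}_t = \nu [\nabla^2 \mathcal{R}(\tilde{\vec{x}})]^{-1} \vec{\mathcal{P}}_t$ and hence
$$\vec{\mathcal{P}}_t \cdot (\vec{x}_{t+1} - \vec{x}_t) \;=\; \nu\, \vec{\mathcal{P}}_t^\intercal [\nabla^2 \mathcal{R}(\tilde{\vec{x}})]^{-1} \vec{\mathcal{P}}_t \;\leq\; \nu \gamma.$$
Combining with the previous step gives $\mathrm{Regret} \leq \frac{2D}{\nu} + \nu \gamma T$, and choosing $\nu = \sqrt{2D/(\gamma T)}$ yields the advertised $O(\sqrt{D \gamma T})$ bound.

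The main obstacle is the handling of the constraint set $\mathcal{K}$ in the stability step, because at a boundary optimum the first-order condition picks up a normal-cone term and the clean identity $\nabla \mathcal{R}(\vec{x}_{t+1}) - \nabla \mathcal{R}(\vec{x}_t) = \nu \vec{\mathcal{P}}_t$ need not hold. The standard way to sidestep this, which I would adopt, is to assume (or ensure) that $\mathcal{R}$ is a self-concordant-type barrier that diverges on $\partial \mathcal{K}$ so that all iterates lie in the interior; for the specific log-determinantal regularizer used later this is automatic on the relevant pseudo-moment polytope. A secondary subtlety is that the inverse Hessian is evaluated at an intermediate point $\tilde{\vec{x}}$ rather than at $\vec{x}_t$ itself, which is precisely why $\gamma$ is defined as a maximum over all of $\mathcal{K}$; keeping this uniform bound is what makes the argument go through without further assumptions on the geometry of the iterates.
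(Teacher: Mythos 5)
First, note that the paper itself does not prove this statement --- it is quoted from \cite{Hazan} --- so the comparison is with the standard RFTL analysis given there. Your skeleton (be-the-leader telescoping giving the $2D/\nu$ term, plus a per-round stability term, then balancing $\nu$) is exactly that analysis, but there is a genuine gap in your stability step. You derive $\nabla \mathcal{R}(\vec{x}_{t+1}) - \nabla \mathcal{R}(\vec{x}_t) = \nu \vec{\mathcal{P}}_t$ from unconstrained stationarity, and you justify ignoring the constraint set by claiming that the log-determinantal regularizer is a barrier that diverges on $\partial\mathcal{K}$. That claim is false and cannot be patched this way: since $I + LM \succeq I$ for every $M \in \mathcal{K}$, the function $\log\det(I+LM)$ is bounded (between $0$ and $D \le nL$) up to and including the boundary --- indeed the finiteness of $D$, which your first term requires, is incompatible with $\mathcal{R}$ being a barrier. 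Moreover $\mathcal{K}$ contains the equality constraints $\sum_{a,b} M_{(i,a),(j,b)} = 1$, so every feasible point has active constraints, and with a linear payoff the nonnegativity and PSD constraints will generically be tight at $\vec{x}_{t+1}$ as well; hence the optimality conditions carry multiplier/normal-cone terms and the identity you subtract simply does not hold for the iterates of the algorithm as stated.

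The missing idea is that the standard proof avoids stationarity altogether. From $\Phi_{t-1}(\vec{x}_t) \ge \Phi_{t-1}(\vec{x}_{t+1})$ one gets $\Phi_t(\vec{x}_{t+1}) - \Phi_t(\vec{x}_t) \le \nu\, \vec{\mathcal{P}}_t \cdot (\vec{x}_{t+1} - \vec{x}_t)$; then a second-order Taylor expansion of the scalar function $\Phi_t$ along the segment (Lagrange remainder at some $z_t$), combined with the variational inequality $\nabla \Phi_t(\vec{x}_{t+1}) \cdot (\vec{x}_t - \vec{x}_{t+1}) \le 0$, which \emph{is} valid at boundary maximizers, gives $\tfrac12 \|\vec{x}_{t+1} - \vec{x}_t\|^2_{\nabla^2 \mathcal{R}(z_t)} \le \nu\, \vec{\mathcal{P}}_t \cdot (\vec{x}_{t+1} - \vec{x}_t)$; Cauchy--Schwarz in this local norm then yields $\vec{\mathcal{P}}_t \cdot (\vec{x}_{t+1} - \vec{x}_t) \le 2\nu \|\vec{\mathcal{P}}_t\|_{*,z_t}^2 \le 2\nu\gamma$, which is all you need up to a constant. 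Separately, even where interiority is granted, your mean-value step is not valid as written: for a vector-valued map $\nabla\mathcal{R}$ there is in general no single point $\tilde{x}$ with $\nabla\mathcal{R}(\vec{x}_{t+1}) - \nabla\mathcal{R}(\vec{x}_t) = \nabla^2\mathcal{R}(\tilde{x})(\vec{x}_{t+1} - \vec{x}_t)$; one needs the integral form $\bar{H} = \int_0^1 \nabla^2\mathcal{R}(\vec{x}_t + s(\vec{x}_{t+1}-\vec{x}_t))\, ds$ together with convexity of $A \mapsto \vec{\mathcal{P}}^\intercal A^{-1} \vec{\mathcal{P}}$ on positive definite matrices (Jensen) to compare $\vec{\mathcal{P}}^\intercal \bar{H}^{-1} \vec{\mathcal{P}}$ with $\gamma$. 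The second issue is fixable; the barrier claim is the real gap, and the local-norm argument above is the standard way around it.
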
 

Since we are following the same approach as in \cite{Christiano}, for us the polytope $\mathcal{K}$ will be the convex polytope of pseudo-moments, i.e. positive semidefinite matrices $M_{(i,a), (j,b)}$ where $1 \leq i,j \leq n$, $1 \leq a,b \leq L$, such: 
\begin{itemize}
\item $\forall i,a,j,b$, $1 \geq M_{(i,a),(j,b)} \geq 0$ 
\item $\forall i,j$, $\sum_{a,b} M_{(i,a),(j,b)} = 1$.
\end{itemize} 

Then, $\vec{\mathcal{P}}_t \in [-1,1]^{(nL)^2}$, indexed by all pairs $((i,a), (j,b))$. Furthermore, for any $t$, there are nonzeros in $\vec{\mathcal{P}}_t$ only over a single pair $(i_t,j_t)$ (the edge that round is played on), and in that case $\vec{\mathcal{P}}_t ((i_t,a), (j_t,b))$ is the payoff of playing label $a$ on the $i_t$ vertex, and label $b$ on the $j_t$ vertex. The payoff at round $t$ would be simply  
$$\sum_{a,b} \vec{\mathcal{P}}_t((i_t,a), (j_t, b)) \cdot M_{(i_t,a), (j_t,b)} $$

The regularizer we use is $\mathcal{R}(M) = \log \det(I+ LM)$. In \cite{Christiano}, it is shown that the diameter parameter $D$ is at most $nL$, however an additional $L^2$ factor in the analysis of the $\gamma$ parameter is lost. (While not quite written in these terms there, the argument in the paper can be very easily cast this way.) Here, we improve that analysis to show that in fact $\gamma \leq 4$. 

So, we will simply prove: 

\begin{thm} For online local learning, ``follow-the-regularized-leader" with a regularizer $\mathcal{R}(\vec{x}) = \log \det(I+ LM)$ achieves regret $O(\sqrt{D\gamma T})$, where 
$$D = \max_{\vec{x} \in \mathcal{K}} |\mathcal{R}(\vec{x})| \leq n L \;, \; \gamma = \max_{\vec{x} \in \mathcal{K}, \vec{\mathcal{P}}_t} \vec{\mathcal{P}}_t^\intercal [\nabla^2 \mathcal{R}(\vec{x})]^{-1} \vec{\mathcal{P}_t} \leq 4$$ 
\label{logdet} 
\end{thm}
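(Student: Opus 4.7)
The argument has two parts, corresponding to the two quantities $D$ and $\gamma$ in Hazan's bound. The diameter bound $D \leq nL$ is essentially as in \cite{Christiano}: since $M \succeq 0$ we have $N := I + LM \succeq I$, so $\mathcal{R}(M) = \log\det(N) \geq 0$; the spectral inequality $\log\det(I+A) \leq \text{tr}(A)$ for $A \succeq 0$ yields $\mathcal{R}(M) \leq L\,\text{tr}(M)$; and the pseudo-moment constraint $\sum_{a,b} M_{(i,a),(i,b)} = 1$ together with entry-nonnegativity forces $\text{tr}(M_{ii}) \leq \mathbf{1}^\intercal M_{ii} \mathbf{1} = 1$, so $\text{tr}(M) \leq n$ and $|\mathcal{R}(M)| \leq nL$.

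The main claim is $\gamma \leq 4$, and the plan is to exploit the Kronecker-product structure of $\nabla^2 \mathcal{R}$ to obtain an explicit formula for the dual quadratic form, and then bound the resulting expression term by term. By standard matrix calculus, the Hessian of $\log\det(N)$ in vectorized form is $-N^{-1} \otimes N^{-1}$, so by the chain rule
\begin{align*}
\nabla^2 \mathcal{R}(M) \;=\; -L^2\,(N^{-1}\otimes N^{-1}), \qquad [\nabla^2 \mathcal{R}(M)]^{-1} \;=\; -L^{-2}\,(N\otimes N),
\end{align*}
with $N = I + LM$ (up to a sign, since Hazan's framework uses the convex regularizer $-\mathcal{R}$). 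The payoff vector $\vec{\mathcal{P}}_t$ corresponds to a matrix $P_t$ whose only nonzero $L\times L$ block sits at position $(i_t,j_t)$ and equals some $B \in [-1,1]^{L\times L}$. Using the Kronecker identity $(N\otimes N)\,\text{vec}(P_t) = \text{vec}(N P_t N)$ together with this single-block support, the quadratic form collapses to
\begin{align*}
\vec{\mathcal{P}}_t^\intercal\,[\nabla^2 \mathcal{R}(M)]^{-1}\,\vec{\mathcal{P}}_t \;=\; L^{-2}\,\text{tr}\bigl(B^\intercal N_{i_t i_t}\,B\,N_{j_t j_t}\bigr),
\end{align*}
where $N_{ii} = I + L M_{ii}$ is the $(i,i)$ diagonal block of $N$.

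It then remains to show $\text{tr}(B^\intercal N_{ii}\,B\,N_{jj}) \leq 4L^2$. Expanding each $N_{\cdot\cdot} = I + LM_{\cdot\cdot}$ produces four terms,
\begin{align*}
\text{tr}(B^\intercal B) + L\,\text{tr}(B^\intercal M_{ii} B) + L\,\text{tr}(B^\intercal B\, M_{jj}) + L^2\,\text{tr}(B^\intercal M_{ii} B\, M_{jj}),
\end{align*}
and I will bound each by $L^2$. The first is immediate, since $\|B\|_F^2 \leq L^2$. For the second, rewrite it as $L\,\text{tr}(M_{ii}\,BB^\intercal)$; each entry of $BB^\intercal$ has magnitude at most $L$, and $M_{ii}$ has nonnegative entries summing to $1$, so the bound is $L\cdot L = L^2$. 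The third is symmetric. For the fourth, the key observation is that each entry $(B^\intercal M_{ii} B)_{ab} = \sum_{c,d}(M_{ii})_{cd}\,B_{ca}B_{db}$ is a weighted average (nonnegative weights, total weight $1$) of quantities in $[-1,1]$, hence has magnitude at most $1$; combining this with the same pseudo-moment sum constraint on $M_{jj}$ gives $|\text{tr}(M_{jj}\,B^\intercal M_{ii} B)| \leq 1$, so this term is at most $L^2$.

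The fourth term is where I expect the main difficulty: a naive operator-norm bound such as $\text{tr}(B^\intercal M_{ii} B\,M_{jj}) \leq \|M_{ii}\|_{\text{op}}\|M_{jj}\|_{\text{op}}\|B\|_F^2$ only gives $L^2$, so after the $L^2$ prefactor it would yield $L^4$ and ruin the bound. It is precisely the pseudo-moment sum constraint $\sum_{a,b}(M_{ii})_{ab}=1$ combined with entry-nonnegativity---going beyond what mere PSD-ness of $M$ provides---that lets one replace this by a convex-combination-style bound of $1$; this is the only place in the argument where that fine structure of the pseudo-moment polytope is used essentially.
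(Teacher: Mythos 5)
Your bound on $\gamma$ reaches the paper's conclusion, but by a genuinely different computation. The paper inverts the Hessian entrywise (Jacobi's formula plus $\partial B^{-1}=-B^{-1}(\partial B)B^{-1}$), and with the single-block payoff the quadratic form collapses onto the \emph{off-diagonal} block of $N=I+LM$: up to sign it equals $\frac{1}{L^2}\operatorname{tr}\bigl(BN_{j_ti_t}BN_{j_ti_t}\bigr)$, which is bounded in one line by $\frac{1}{L^2}\bigl(\sum_{e,f}N_{(i_t,e),(j_t,f)}\bigr)^2\le\frac{(2L)^2}{L^2}=4$, using entry-nonnegativity and the marginalization constraint $\sum_{a,b}M_{(i_t,a),(j_t,b)}=1$ for the \emph{pair} $(i_t,j_t)$. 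Your route instead lands on the diagonal blocks $N_{i_ti_t},N_{j_tj_t}$ and bounds the four-term expansion via the diagonal marginalization constraints; all four of your estimates are correct, and your treatment of the fourth term (the convex-combination argument, which is exactly where mere PSD-ness would lose an $L^2$) is the right observation. Your $D\le nL$ argument is also fine; the paper simply quotes \cite{Christiano} for that part.

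One step needs a flag: the identity $\nabla^2\mathcal{R}=-L^2(N^{-1}\otimes N^{-1})$ holds only on symmetric directions. Treating all $(nL)^2$ entries as independent coordinates (as the paper does), one gets $\partial^2\mathcal{R}/\partial M_{w,x}\partial M_{y,z}=-L^2N^{-1}_{x,y}N^{-1}_{z,w}$, i.e.\ $-L^2(N^{-1}\otimes N^{-1})K$ with $K$ the commutation (vec-transpose) matrix. Your payoff matrix $P_t$, supported on a single off-diagonal block, is \emph{not} symmetric, so your collapse $(N\otimes N)\operatorname{vec}(P_t)=\operatorname{vec}(NP_tN)$ computes $L^{-2}\operatorname{tr}(B^\intercal N_{i_ti_t}BN_{j_tj_t})$, which is a different scalar from the paper's $L^{-2}\operatorname{tr}(BN_{j_ti_t}BN_{j_ti_t})$; as written, your formula is not the inverse-Hessian quadratic form in the ambient coordinates that the theorem statement (as the paper reads it) refers to. The clean repair is to work on the symmetric subspace containing $\mathcal{K}$ and represent the payoff by $\tfrac12(P_t+P_t^\intercal)$; the dual quadratic form then becomes $\frac{1}{2L^2}\bigl[\operatorname{tr}(BN_{j_ti_t}BN_{j_ti_t})+\operatorname{tr}(B^\intercal N_{i_ti_t}BN_{j_tj_t})\bigr]$, and since the first term is at most $4L^2$ by the paper's entry-sum argument and the second is at most $4L^2$ by your four-term argument, $\gamma\le 4$ survives. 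So this is a symmetrization/convention slip rather than a failure of the estimate, but it should be stated and fixed rather than passed over.
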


\subsection{Calculating the inverse Hessian of the regularizer} 

We'll prove the following lemma first: 

\begin{lem} If $\mathcal{R}(M) = \log \det(I+ LM)$, then: 
$$(\nabla^2 \mathcal{R}(M))^{-1}_{((i,a), (j,b)),((i',a'),(j',b'))} = $$
$$ \frac{1}{L^2} \left( \delta\left((i',a'), (j,b)\right) + L \cdot M\left((i',a'), (j,b)\right) \right) \left( \delta \left((i,a), (j',b')\right) + L \cdot M \left( (i,a), (j',b')\right) \right) $$  
\label{inverse}
\end{lem}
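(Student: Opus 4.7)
The plan is to apply standard matrix-calculus identities to derive a closed form for $\nabla^2 \mathcal{R}(M)$, observe that it factors into two copies of $A := (I+LM)^{-1}$ attached to disjoint pairs of indices, and then guess and verify the proposed inverse by a direct index contraction.

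First I compute the gradient. Treating $M$ as an $nL \times nL$ matrix with compound row/column indices of the form $(i,a)$, the standard identity for $\nabla \log\det$ together with the chain rule gives
$$\frac{\partial \mathcal{R}(M)}{\partial M_{(i,a),(j,b)}} \;=\; L\,\bigl[(I+LM)^{-1}\bigr]_{(j,b),(i,a)} \;=\; L\, A_{(j,b),(i,a)}.$$
Differentiating once more and using $d(X^{-1}) = -X^{-1}(dX)X^{-1}$, so that $\partial A_{\mu\nu}/\partial M_{\rho\sigma} = -L\, A_{\mu\rho} A_{\sigma\nu}$, I obtain
$$\bigl[\nabla^2 \mathcal{R}(M)\bigr]_{((i,a),(j,b)),\,((i',a'),(j',b'))} \;=\; -L^2\, A_{(j,b),(i',a')}\, A_{(j',b'),(i,a)}.$$
The formula in the lemma is to be understood with respect to the convex regularizer $-\log\det(I+LM)$ (equivalently, with respect to $-\nabla^2\mathcal{R}$); otherwise the quantity $\vec{\mathcal{P}}_t^\intercal (\nabla^2 \mathcal{R})^{-1} \vec{\mathcal{P}}_t$ in Hazan's bound could not be positive.

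The Hessian is therefore a tensor-type product of two copies of $A$, with one copy attaching the indices $(j,b)$ and $(i',a')$, and the other attaching $(j',b')$ and $(i,a)$. The natural guess for the inverse is to replace each $A$ by $B:=(I+LM) = A^{-1}$ and rescale by $1/L^2$, which yields exactly the expression stated in the lemma. To verify, I contract this guess against the Hessian and use $AB = BA = I$:
\begin{align*}
&\sum_{(k,c),(l,d)} L^2\, A_{(j,b),(k,c)}\, A_{(l,d),(i,a)}\;\cdot\; \tfrac{1}{L^2}\, B_{(k,c),(j',b')}\, B_{(i',a'),(l,d)} \\
&\quad=\; \Bigl(\sum_{(k,c)} A_{(j,b),(k,c)}\, B_{(k,c),(j',b')}\Bigr)\Bigl(\sum_{(l,d)} B_{(i',a'),(l,d)}\, A_{(l,d),(i,a)}\Bigr) \\
&\quad=\; (AB)_{(j,b),(j',b')}\cdot(BA)_{(i',a'),(i,a)} \;=\; \delta\bigl((j,b),(j',b')\bigr)\, \delta\bigl((i',a'),(i,a)\bigr),
\end{align*}
which is exactly the identity on the index set $\{((i,a),(j,b))\}$, confirming the formula.

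The only genuinely delicate point is bookkeeping: one has to keep track of how the four matrix indices in a second derivative sit inside the pair-of-pairs structure of the Hessian, and be consistent about the sign convention. Both are mechanical. The substantive content is the tensor-factored form of $\nabla^2 \log\det$, which lets one write the inverse in closed form as a product of two copies of $(I+LM)$ acting on the crossed index pairs; once that structure is spotted, the verification is a single application of $A \cdot A^{-1} = I$ on each of the two index slots.
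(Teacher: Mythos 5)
Your proof is correct and takes essentially the same route as the paper: compute the gradient via the derivative of $\log\det$, obtain the Hessian from $d(X^{-1}) = -X^{-1}(dX)X^{-1}$, note its tensor-factored form in $A=(I+LM)^{-1}$, and verify the guessed inverse by a direct index contraction using $A(I+LM)=I$. Your explicit remark that the stated formula is really the inverse of $-\nabla^2\mathcal{R}$ (the paper's own derivation carries a minus sign that the lemma statement drops) is a sound clarification of the sign convention rather than a deviation.
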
 
\begin{proof} 

Let's proceed stepwise. First, let's calculate the gradient. For this, the following theorem from matrix calculus is very useful (where adj stands for the adjugate): 

\begin{adjustwidth}{2.5em}{0pt}
\begin{thms} Jacobi's Formula \cite{ magnus1995matrix}: 
$$ \frac{\partial \det(B)}{\partial B_{i,j}} = adj(B)^\intercal_{i,j} = adj(B)_{j,i} = \det(B) B^{-1}_{j,i} $$
\end{thms} 
\end{adjustwidth} 

With this in mind, the gradient is a simple matter of applying the chain rule. To keep the notation clean, let $w = (i,a), x = (j,b)$, and calculate the gradient of $R(M)$ with respect to $M_{w,x}$. We get:   

\begin{align*}
\frac{\partial \mathcal{R}(M)}{\partial M_{w,x}} &= \frac{1}{\det (I + L \cdot M)} \frac{\partial \det(I+L \cdot M)}{\partial M_{w,x}} = (I+L \cdot M)^{-1}_{x,w} \frac{\partial (I+L \cdot M)_{w,x}}{\partial M_{w,x}} \\
&= L (I+L \cdot M)^{-1}_{x,w}
\end{align*}

Again, to keep the notation lighter, let $y = (i',a'), z = (j',b')$. We will use a little bit of matrix calculus to show:  

\begin{adjustwidth}{2.5em}{0pt}

\begin{lem} $\frac{\partial (I+L \cdot M)^{-1}_{x,w}}{\partial M_{y,z}} = - L (I+L \cdot M)^{-1}_{x,y} (I+L \cdot M)^{-1}_ {z,w}$
\end{lem} 
\begin{proof}

Let's denote by $\frac{ \partial X}{ \partial t}$ the matrix with entries $\frac{\partial X_{i,j}}{\partial t}$. Then, we claim the following is true: 
$ \displaystyle \frac{ \partial (X Y)} {  \partial t} = \frac{ \partial X} {\partial t} Y  + X \frac{\partial Y}{\partial t} $. This is not hard to check: it's just due to the fact that in the matrix product XY, the entry $(XY)_{i,j}$ is a sum of terms which multiplications of two entries in X and Y. An application of the chain rule gives the above quite easily. 

Then, we use the following trick: $BB^{-1} = I$, so by the above observation, $\frac{\partial B}{\partial t} B^{-1} + B \frac{\partial B^{-1}}{\partial t} = 0$. 
Hence, $ \frac{\partial B^{-1}}{\partial t} = - B^{-1} \frac{\partial B}{\partial t} B^{-1} $. Let's apply this observation to $B = (I+ L \cdot M)$ and $t = M_{y,z}$ 

\begin{align}
 \frac{\partial (I+L \cdot M)^{-1}_{x,w}}{\partial M_{y,z}} &= - ((I+L \cdot M)^{-1} \frac{\partial (I+L \cdot M)}{\partial M_{y,z}} (I+L \cdot M)^{-1})_{x,w} \\
&= - \sum_{p,q} (I+L \cdot M)^{-1}_{x,p} \frac{\partial (I+L \cdot M)_{p,q}}{\partial M_{y,z}} (I+L \cdot M)^{-1}_{q,w} \label{pq}
\end{align}

Now, the term $\frac{\partial (I+L \cdot M)_{p,q}}{\partial M_{y,z}}$ is non-zero only if $p=y, q=z$, in which case it is equal to $L$. 
Hence, we get:

$$\eqref{pq} = - L (I+L \cdot M)^{-1}_{x,y} (I+L \cdot M)^{-1}_{z,w} $$ 

as needed. 
\end{proof} 
\end{adjustwidth}

With this in mind, the Hessian is obvious: 
\begin{align*}
\frac{\partial^2 \mathcal{R}(M)}{\partial M_{w,x} \partial M_{y,z}} &= \frac{\partial}{\partial M_{y,z}} L(I+L \cdot M)^{-1}_{x,w} = - L^2 (I+L \cdot M)^{-1}_{x,y} (I+L \cdot M)^{-1}_{z,w}
\end{align*}

Let's call the Hessian matrix $H_{(w,x), (y,z)}$. We claim that the inverse $\tilde{H}$ has the following explicit form:
$$\tilde{H}_{(w,x), (y,z)} = - \frac{1}{L^2} (I+L \cdot M)_{x,y} (I+L \cdot M)_{w,z} $$
To show this, it's just a matter of verifying that $(H\tilde{H})_{(w,x),(y,z)} = \delta((w,x),(y,z))$.
\eat{To show this, it's just a matter of verifying that $ \displaystyle \sum_{y,z} H_{(w,x), (y,z)} \tilde{H}_{(y,z),(p,q)} = \delta((w,x), (p,q)) $.}

But this is easy enough: 
\begin{align*}
(H\tilde{H})_{(w,x),(y,z)} &= \sum_{p,q} H_{(w,x), (p,q)} \tilde{H}_{(p,q),(y,z)}\\
&= \sum_{p,q} ( - L^2 (I+L \cdot M)^{-1}_{x,p} (I+L \cdot M)^{-1}_{w,q}) (- 1/L^2 (I+L \cdot M)_{q,y} (I+L \cdot M)_{p,z}) \\
&= \sum_{p} (I+L \cdot M)^{-1}_{x,p}(I+L \cdot M)_{p,z} \sum_{q} (I+L \cdot M)^{-1}_{w,q} (I+L \cdot M)_{q,y} \\
&= \delta(x,z) \delta(w,y) = \delta((w,x), (y,z))
\end{align*}

This finishes the proof of Lemma~\ref{inverse}.

\end{proof}

\subsection{Bounding $\gamma$} 

Finally, we want to estimate $\gamma = \max_{\vec{x} \in \mathcal{K}, \vec{\mathcal{P}}_t} \vec{\mathcal{P}}_t^\intercal [\nabla^2 \mathcal{R}(\vec{x})]^{-1} \vec{\mathcal{P}}_t$, which will be relatively easy.  
Given the form of $\vec{\mathcal{P}}_t$, we can write this as $\displaystyle \sum_{a,b,c,d} \mathcal{P}_{a,b} \mathcal{P}_{c,d} [\nabla^2 \mathcal{R}(\vec{x})]^{-1}_{((i_t, a), (j_t, b)),((i_t,c),(j_t,d))} $
where $(i_t, j_t)$ is the edge chosen at timestep $t$, and $\mathcal{P}_{a,b}$ is the payoff of playing label $a$ on vertex $i_t$ and label $b$ on vertex $j_t$. So, we want to bound 
$$\sum_{a,b,c,d} \mathcal{P}_{a,b} \mathcal{P}_{c,d} [\nabla^2 \mathcal{R}(\vec{x})]^{-1}_{((i_t, a), (j_t, b)), ((i_t, c), (j_t, d))}$$
$$= \sum_{a,b,c,d} -\frac{1}{L^2} \mathcal{P}_{a,b} \mathcal{P}_{c,d} (I+L \cdot M)_{(i_t,c),(j_t,b)} (I+L \cdot M)_{(i_t,a),(j_t,d)} $$

However, since $\mathcal{P}_{a,b}, \mathcal{P}_{c,d} \in [ -1,1 ]$, it suffices to upper bound
$$ \sum_{a,b,c,d} \frac{1}{L^2} (I+L \cdot M)_{(i_t,c),(j_t,b)} (I+L \cdot M)_{(i_t,a),(j_t,d)} = $$ 
$$ \frac{1}{L^2}\sum_{b,c}(I + L \cdot M)_{(i_t,c),(j_t,b)}\sum_{a,d}(I + L \cdot M)_{(i_t,a),(j_t,d)} = \frac{1}{L^2}\left(\sum_{e,f}(I + L \cdot M)_{(i_t,e),(j_t,f)}\right)^2 $$ 

%\begin{align*}
%\sum_{a,b,c,d} \frac{1}{L^2} (I+L \cdot M)_{(i_t,c),(j_t,b)} (I+L \cdot M)_{(i_t,a),(j_t,d)}
%&= \frac{1}{L^2}\sum_{b,c}(I + L \cdot M)_{(i_t,c),(j_t,b)}\sum_{a,d}(I + L \cdot M)_{(i_t,a),(j_t,d)}\\
%&= \frac{1}{L^2}\left(\sum_{e,f}(I + L \cdot M)_{(i_t,e),(j_t,f)}\right)^2
%\end{align*}

Then we note the following:
\begin{align*}
\sum_{e,f}(I + L \cdot M)_{(i_t,e),(j_t,f)} &= \sum_{e,f} I_{(i_t,e),(j_t,f)} + L \sum_{e,f} M_{(i_t,e),(j_t,f)}\\
&= L + L = 2L
\end{align*}
where we have used the marginalization property of $M$ and the definition of the identity.

\section{Conclusion and open problems} 

In this paper, we studied the optimal regret achievable in polynomial time for online local learning. We showed that follow the regularized leader with a log-determinantal regularizer achieves regret $\sqrt{n L T}$, and we proved a matching lower bound based both on planted clique and planted dense subgraph. 

An interesting open problem is to investigate whether the regret bound can be improved when allowing sub-exponential time algorithms, since both planted clique and planted dense subgraph admit sub-exponential time algorithms. A natural approach is to maintain higher order pseudo-moments, following similar approaches when using the Lasserre/Sum of Squares hierarchies. The key difficulty is the right choice of the regularizer. The log determinant regularizer is one particular approximation of the entropy of a distribution over the set of all possible labelings,
matching the pseudo-moments that we maintain during the algorithm -- it roughly corresponds to the entropy of a Gaussian with matching second moments. \cite{Wainwright} Even if we one has access to higher order moments, it is not clear if there is a better candidate than the log determinant.

Another open problem is basing the hardness of achieving regret $\sqrt{n L T}$ on more standard, worst case assumptions (e.g. NP-hardness, UGC-hardness). Indeed, it isn't obvious that randomness is \emph{required} for proving hardness, but it does seem to help. This mirrors the current state of affairs in improper learning, where the only known hardness results are either based on cryptographic assumptions or very recently, refuting random DNF formulas \cite{daniely2014average}. 

\bibliographystyle{plainnat}
\bibliography{online_local_bibl}

\pagebreak 

\appendix 

\section{Relegated proofs} 
\label{a:dense}

\newtheorem*{c:planted}{Corollary~\ref{c:planted}}
\begin{c:planted} Let $\epsilon = \Omega(1)$. If we can achieve regret $\sqrt{nL^{1-\epsilon}T}$ in time polynomial in $n, L, T$, we can distinguishing between $G(n,1/2)$ and $G(n,1/2,n^{1/2-\frac{\epsilon}{6}}) $ with probability $\frac{4}{5}$ in polynomial time. In particular, if Conjecture~\ref{con:clique} is true, no polynomial time algorithm can achieve regret $\sqrt{nL^{1-\delta}T}$, for any $\delta = \Omega(1)$. 
\end{c:planted} 
\begin{proof} 

For ease of notation, let's call $\tilde{\epsilon} := \frac{\epsilon}{6}$. Since $\tilde{\epsilon}= \omega(\frac{1}{\log n})$, directly applying Theorem~\ref{t:plantedclique}, to distinguish between $G(n,1/2)$ and $G(n,1/2,n^{1/2-\tilde{\epsilon}})$, it's sufficient to achieve regret $\sqrt{nL^{\beta}T}$, for $\displaystyle \beta = \left(1- \tilde{\epsilon} \right)\frac{2}{1 + 2 \tilde{\epsilon}} -1 $. Since $\frac{2 }{1 + 2 \tilde{\epsilon}} = 2 - \frac{4 \tilde{\epsilon}}{1+2\tilde{\epsilon}}$,  

$$ \left(1-\tilde{\epsilon} \right)\frac{2}{1 + 2 \tilde{\epsilon}} = \left(1-\tilde{\epsilon} \right)\left(  2 - \frac{4 \tilde{\epsilon}}{1+2\tilde{\epsilon}} \right) =  2 - \left(2 + \frac{4}{1+ 2\tilde{\epsilon}}\right) \tilde{\epsilon} + \frac{2\tilde{\epsilon}^2}{1+2\tilde{\epsilon}} \geq 2 - 6 \tilde{\epsilon}= 2 - \epsilon $$    
Hence, if we can achieve regret $\sqrt{n L^{1 - \epsilon} T} $, 
we can distinguish between $G(n,1/2)$ and\\ $G(n,1/2, n^{\frac{1}{2} - \frac{\epsilon}{6}})$, as we needed.  

\end{proof} 
%\begin{thm}
\newtheorem*{t:planteddense2}{Theorem~\ref{t:planteddense}}
\begin{t:planteddense2}
Let $\epsilon, \alpha, k$ satisfy the conditions of Conjecture \ref{con:dense}. If regret $\sqrt{n L^{\beta} T}$ for 
$$ \beta =  2 \frac{\frac{1}{2} - \left( \frac{1}{2} - \epsilon' \right) \left( \alpha + \epsilon \right) - \omega\left(\frac{1}{\log n}\right)}{\frac{1}{2} + \epsilon'}- 1 $$ 
is achievable in polynomial time, then one can distinguish between $G(n,p_s)$ and $G(n, p_s, k, p_d)$, where $p_s = n^{-\alpha}, k = n^{\frac{1}{2} - \epsilon'}, p_d = k^{-\alpha - \epsilon}$ with probability $\frac{4}{5}$ in polynomial time.     
%\end{thm} 
\end{t:planteddense2}
\begin{proof}

We proceed in the same way as in the proof of Theorem~\ref{t:plantedclique}. Namely, we will produce an instance for the online learning algorithm by partitioning our graph randomly into $n' = \frac{n}{l} $ clusters, each of size $\frac{n}{l}$, where $l = 10 \frac{n}{k}$. As before, we will query all $T$ pairs of clusters, and the payoff will be 1 if there is an edge between the labels supplied by the learner, and 0 otherwise. Finally, we run the distinguisher $R=\frac{n^4}{k^{3.7} p^2_d}$ times, and we output planted if the average payoff from the $R$ runs is at least $\frac{1}{2} \binom{2k/25}{2} \cdot p_d$, and otherwise random. 

As before, we claim that in the case when the graph is $G(n,p_s)$, with probability at least $\frac{4}{5}$, any algorithm will achieve average payoff at most   
$ \displaystyle T \cdot p_s + 10\sqrt{T \cdot p_s} = \binom{k/10}{2} \cdot p_s + 10\sqrt{\binom{k/10}{2} \cdot p_s}$. 

We use the same notation as before: the pair of clusters queried at time step $t$ is $(C_{i_t}, C_{j_t})$, the random variable for the payoff in round $t$ on the $r$-th repetition of the online learning problem is $\mathcal{P}^r_{i_t, j_t}$, and $\mathcal{G}_{a,b}$ is a random 0-1 indicator variable for whether there is an edge between vertices $a,b$.

For the same reasons as before, the variables $\mathcal{P}_{i_t,j_t} = \sum_{r=1}^R \mathcal{P}^r_{i_t,j_t}$ are mutually independent. Furthermore, $\mathbb{E}\left[\frac{1}{R} \mathcal{P}_{i_i,j_t}\right] = p_s$, and $\frac{1}{R} \mathcal{P}_{i_t, j_t}$ always is between 0 and 1. So, by Chernoff,
$\displaystyle \Pr \left[ \sum_{t=1}^T \mathcal{P}_t \geq T \cdot p_s \left(1 + \frac{10}{\sqrt{T p_s}}\right) \right] \leq e^{-100/3}$, i.e. 
$\displaystyle \Pr \left[ \sum_{t=1}^T \mathcal{P}_t \geq T \cdot p_s + 10\sqrt{T p_s} ]\right] \leq e^{-100/3}$. In particular, with probability at least $\frac{4}{5}$, any algorithm gets payoff at most $T \cdot p_s + 10\sqrt{T \cdot p_s}$.

In the planted case, completely the same as in Theorem~\ref{t:plantedclique}, with probability $1-5e^{-10} \geq \frac{14}{15}$, there will be at least $\frac{2k}{25}$ clusters which contain a vertex from the planted graph. 

Conditioned on the above event happening, we claim that any labeling that chooses the vertex from the planted graph in the clusters that contain one achieves a 
 payoff of at least $\binom{2k/25}{2} \cdot p_d - 10 \sqrt{\binom{2k/25}{2} \cdot p_d}$ with probability at least $\frac{15}{16}$. To show this, first notice that conditioned on belonging to two different clusters, the probability of an edge existing between two vertices in the planted graph is a Bernoulli $0-1$ variable, which is 1 with probability $p_d$. This is true since the partitioning is done independently from the graph. But then, the payoff is at least $\binom{2k/25}{2} \cdot p_d - 10\sqrt{\binom{2k/25}{2} \cdot p_d}$ with probability at least $1-e^{-100/3} \geq \frac{7}{8}$ by Chernoff.

Hence, in the planted case, again, with probability at least $\frac{7}{8}$, there is a fixed labeling with payoff at least $\binom{2k/25}{2} \cdot p_d - 10\sqrt{\binom{2k/25}{2} \cdot p_d}$. If the regret is $\sqrt{n'l^{\beta}{T}}$, and such a labeling exists, using a Hoeffding bound as before, with probability at least $1-o(1)$ the average payoff will be at least $\binom{2k/25}{2} \cdot p_d - 10 \sqrt{\binom{2k/25}{2} \cdot p_d} - o(k^2 p_d)$. But since $p_s = o(p_d)$ and $k^2 p_d = \omega(1)$, if the regret is $\sqrt{n'l^{\beta}{T}}$, such that $\sqrt{n'l^{\beta}{T}} = o(k^2 \cdot p_d)$, the distinguisher constructed outputs the correct answer with probability at least $\frac{4}{5}$.  

Since $\frac{n}{l} = \Theta(k)$, it's sufficient to show:  
$$\sqrt{n' l^{\beta} T} = o( (\frac{n}{l})^2 k^{-\alpha-\epsilon}) \Leftrightarrow $$

\begin{equation} 
\label{eq:regretdense}
 l^{(\beta+1)/2} = o(n^{\frac{1}{2} - (\frac{1}{2} - \epsilon')(\alpha' + \epsilon)})
\end{equation}
Plugging in $l = 10 \frac{n}{k} = 10 n^{\frac{1}{2} + \epsilon'} $, \ref{eq:regretdense} is equivalent 
$\displaystyle n^{\left( \frac{\beta+1}{2} \right)  \left( \frac{1}{2} + \epsilon' \right)}  = o(n^{\frac{1}{2} - (\frac{1}{2} - \epsilon')(\alpha' + \epsilon)}) $ 

As before, for this it's sufficient that, 

$$ \left( \frac{\beta+1}{2} \right)  \left( \frac{1}{2} + \epsilon' \right)  = \left( \frac{1}{2} - \left( \frac{1}{2}-\epsilon' \right) \left( \alpha + \epsilon \right) \right) - \omega\left({\frac{1}{\log n}}\right) $$ 

It's easy to check for our choice of $\beta$ that this is satisfied, which finishes the proof. 
\end{proof}

%\begin{cor}
\newtheorem*{c:dense}{Corollary~\ref{c:dense}}
\begin{c:dense}
Let $\epsilon', \alpha, \epsilon = \Omega(1)$ and $\alpha \geq \epsilon$. If we can achieve regret $\sqrt{n L^{1-\epsilon'-\alpha-\epsilon} T}$ in polynomial time, we can distinguish between $G(n,p_s)$ and $G(n, p_s, k, p_d)$ in polynomial time with probability $\frac{4}{5}$, where $p_s = n^{-\frac{\alpha}{8}}, k = n^{\frac{1}{2} - \frac{\epsilon'}{4}}, p_d = k^{-\frac{\alpha}{8} - \frac{\epsilon}{8}}$. In particular, if Conjecture~\ref{con:dense} is true, no polynomial time algorithm can achieve regret $\sqrt{nL^{1-\delta}T}$, for any $\delta = \Omega(1)$. 
\end{c:dense}
%\end{cor} 
\begin{proof}

For notational ease, let $\tilde{\alpha} = \frac{\alpha}{8}$, $\tilde{\epsilon} = \frac{\epsilon}{8}$, $\tilde{\epsilon'} = \frac{\epsilon'}{4}$. 

First, notice that $p_s = o(p_d)$. Indeed, since $p_s = n^{-\tilde{\alpha}}$ and $p_d = k^{-\tilde{\alpha}-\tilde{\epsilon}}$,  
$$p_s = o(p_d) \Leftrightarrow n^{-\tilde{\alpha}} = o(n^{-(\frac{1}{2} - \tilde{\epsilon'})(\tilde{\alpha}+\tilde{\epsilon})}) $$ 
However, since $\alpha \geq \epsilon$, 

$$\tilde{\alpha} \geq \frac{1}{2} (\tilde{\alpha} + \tilde{\epsilon}) = (\frac{1}{2} - \tilde{\epsilon'})(\tilde{\alpha}+\tilde{\epsilon}) +  \tilde{\epsilon'}(\tilde{\alpha}+\tilde{\epsilon})$$ 

Since $\tilde{\epsilon'}, \tilde{\alpha}, \tilde{\epsilon} = \Omega(1)$, clearly this implies $\displaystyle n^{-\tilde{\alpha}} = o(n^{-(\frac{1}{2} - \tilde{\epsilon'})(\tilde{\alpha}+\tilde{\epsilon})})$ 

Since clearly $\tilde{\epsilon}, \tilde{\epsilon'}, \tilde{\alpha} = \omega(\frac{1}{\log n})$, directly applying Theorem~\ref{t:planteddense}, to distinguish between $G(n,p_s)$ and $G(n, p_s, k, p_d)$, where
$k=n^{\frac{1}{2} - \tilde{\epsilon'}}, p_s = n^{-\tilde{\alpha}}$ and $p_d = k^{-\tilde{\alpha} - \tilde{\epsilon}}$, achieving regret $\sqrt{nL^{\beta}T}$ is sufficient, for
$$\beta = 2 \frac{\frac{1}{2} - 2(\frac{1}{2} + \tilde{\epsilon'})(\tilde{\alpha} + \tilde{\epsilon})}{\frac{1}{2} + \tilde{\epsilon'}} - 1 = \frac{1 -  4(\frac{1}{2} + \tilde{\epsilon'})(\tilde{\alpha} + \tilde{\epsilon})}{\frac{1}{2} + \tilde{\epsilon'}} - 1$$
$$=1 - \frac{2\tilde{\epsilon'} + 4(\frac{1}{2}+\tilde{\epsilon'})(\tilde{\alpha} + \tilde{\epsilon})}{\frac{1}{2} + \tilde{\epsilon'}} \geq 
1 - 4\tilde{\epsilon'}  -8(\frac{1}{2}+\tilde{\epsilon'})(\tilde{\alpha} + \tilde{\epsilon}) \geq 1 - 4\tilde{\epsilon'} - 8\tilde{\alpha} - 8\tilde{\epsilon} $$ 
where the next to last inequality holds since $\tilde{\epsilon'} \geq 0$ and the last since $\tilde{\epsilon'} \leq \frac{1}{2}$. 

So, if we can achieve regret 

$$ \sqrt{n L^{1 - 4\tilde{\epsilon'} - 8\tilde{\alpha} - 8\tilde{\epsilon}} T} = \sqrt{n L^{1 - \epsilon' - \alpha - \epsilon}} $$
 
we can distinguish between $G(n,p_s)$ and $G(n, p_s, k, p_d)$, as we needed. 

\end{proof} 

\eat{
\section{Discussion of planted dense graph conjecture} 
\label{a:conj} 

As a reminder, the conjecture we proposed is the following: 

%\begin{conj}
\newtheorem*{con:dense}{Conjecture~\ref{con:dense}}
\begin{con:dense}
Suppose that an algorithm $\mathcal{A}$ receives as input a graph $G$, which is either sampled from the ensemble $G(n,p)$ or $ G(n,p,k,q)$, where 
$k = n^{\frac{1}{2} - \epsilon'}$, $p = n^{-\alpha}$ for $\alpha = \Omega(1), \alpha \leq \frac{1}{2}$, $\epsilon = \Omega(1)$, $\epsilon \leq \frac{1}{2}$, and $q = k^{-\alpha-\epsilon}$ for $\epsilon = \Omega(1)$, and $p = o(q)$. 
Then, no $\mathcal{A}$ which runs in polynomial time can decide with probability $\frac{4}{5}$ which ensemble the input was sampled from. 
\end{con:dense}%\end{conj} 

We will survey the state of the art on this problem, to show that our conjecture is consistent with it. The leading algorithm is given in \cite{Bhaskara1}, who provide an algorithm running in time $n^{k^{\Theta(\epsilon)}}$ which can distinguish between $G(n,p,k,q)$ and $G(n,p)$. This bound, when $\epsilon$ is a constant and $k$ is polynomial in $n$, gives a running time of $2^{n^{\Theta(\epsilon)}}$, which is significantly worse than quasi-polynomial. 

It's furthermore possible to show (\cite{Bhaskara1}) that when $k = o(n^{1/2})$, $q$ needs to be even larger than the one specified above for spectral methods to work. It's also easy to check that simple algorithms like outputting the vertices with highest degree do not work either - since the variance of the degree in the sparser ambient graph dominates the degrees in the denser planted graph.
}

\nocite{*} 

\end{document}